\documentclass[11pt,letterpaper,twocolumn]{article}

\usepackage[margin=0.75in]{geometry}

\usepackage[utf8]{inputenc}
\usepackage[T1]{fontenc}

\usepackage[numbers,sort&compress]{natbib}
\usepackage{hyperref}
\usepackage{url}
\usepackage{booktabs}
\usepackage{amsfonts}
\usepackage{amsmath}
\usepackage{amsthm}
\usepackage{newtxtext}
\usepackage{newtxmath}
\newtheorem{definition}{Definition}
\newtheorem{proposition}[definition]{Proposition}
\newtheorem{corollary}[definition]{Corollary}
\usepackage{nicefrac}
\usepackage{microtype}
\usepackage{graphicx}
\usepackage{xcolor}
\usepackage{multirow}
\usepackage{array}
\usepackage{subcaption}
\usepackage{enumitem}
\usepackage{xspace}
\usepackage{tabularx}
\usepackage[section]{placeins}
\usepackage{colortbl}
\usepackage{setspace}
\usepackage[ruled,vlined,linesnumbered]{algorithm2e}
\usepackage{placeins}
\usepackage{titling}

\newcommand{\epikg}{EpiKG\xspace}

\newcommand{\eg}{e.g.,\xspace}

\newcommand{\pp}{\,\text{pp}}
\newcommand{\ci}[2]{[#1, #2]}
\newcommand{\up}[1]{\textcolor{green!60!black}{$\uparrow$#1\pp}}
\newcommand{\dn}[1]{\textcolor{red!70!black}{$\downarrow$#1\pp}}
\newcommand{\smark}{\textcolor{green!60!black}{\checkmark}}
\newcommand{\xmark}{\textcolor{red!80!black}{$\times$}}

\hypersetup{
  colorlinks=true,
  linkcolor=blue!60!black,
  citecolor=blue!60!black,
  urlcolor=blue!60!black,
  pdfborder={0 0 0},
  pdftitle={ClinicalBench: Stress-Testing Assertion-Aware Retrieval for Cross-Admission Clinical QA on MIMIC-IV},
  pdfauthor={Alex Stinard, MD}
}

\title{ClinicalBench: Stress-Testing Assertion-Aware Retrieval\\for Cross-Admission Clinical QA on MIMIC-IV}
\author{Alex Stinard, MD\\
  Department of Clinical Sciences, College of Medicine\\
  University of Central Florida, Orlando, FL 32816\\
  \texttt{alex.stinard@ucf.edu}}
\date{\today}

\begin{document}

\twocolumn[%
  \begin{center}
    {\LARGE\bfseries ClinicalBench: Stress-Testing Assertion-Aware Retrieval\par}
    \vspace{0.25em}
    {\LARGE\bfseries for Cross-Admission Clinical QA on MIMIC-IV\par}
    \vspace{0.9em}
    {\large Alex Stinard, MD\par}
    \vspace{0.25em}
    {\small Department of Clinical Sciences, College of Medicine\par}
    {\small University of Central Florida, Orlando, FL 32816\par}
    {\small \texttt{alex.stinard@ucf.edu}\par}
    \vspace{0.6em}
    {\small \today\par}
    \vspace{0.8em}
    \rule{\linewidth}{0.4pt}\par
    \vspace{0.3em}
    {\textsc{Preprint --- arXiv version}\par}
    \vspace{0.2em}
    \rule{\linewidth}{0.4pt}
  \end{center}
  \vspace{0.6em}
  \begin{minipage}{\linewidth}
    \begin{center}\textbf{\large Abstract}\end{center}
    \small

\noindent\textbf{Objective.}
Reasoning benchmarks measure clinical performance on clean inputs. We evaluate the step before reasoning: retrieval over real EHR notes, where negation, temporality, and family-versus-patient attribution can flip a correct answer to a wrong one.

\noindent\textbf{Materials and Methods.}
EpiKG carries an assertion label and a temporality tag with every fact in a patient knowledge graph, then routes retrieval by question intent. ClinicalBench is a 400-question test over 43 MIMIC-IV patients across 9 assertion-sensitive categories. A 7-condition ablation tests each piece of EpiKG across six LLMs (Opus 4.6, GPT-OSS 20B, MedGemma 27B, Gemma~4 31B, MedGemma 1.5 4B, Qwen 3.5 35B). Three physicians blindly adjudicated 100 paired items.

\noindent\textbf{Results.}
Author-blind primary endpoint: leave-author-out paired exact McNemar on 50 items (Hird $\times$ Nadeem unanimous strict), $\Delta = +22.0\pp$ \ci{+5.1\pp}{+31.5\pp} (95\% Newcombe CI), $p=0.0192$. The architectural novelty is C2b (Contriever dense-RAG) $\to$ C4g\_kw (intent-aware KG-RAG) on the change-excluded $n=362$ endpoint: $+8.84\pp$ (paired McNemar $p=1.79\times10^{-3}$); $+12.43\pp$ under oracle intent. Sensitivity analyses: three-rater physician majority $+24.0\pp$ ($p=0.0075$; Fleiss' $\kappa=0.413$; subject to single-author circularity since the author is R1); deterministic keyword proxy $+39.5\pp$ over LLM-alone (reproducibility tool, not a clinical correctness claim). The audit found 56\% of auto-generated references defective.

\noindent\textbf{Discussion.}
Across the six models, the gain shrinks as the LLM-alone baseline rises ($\beta=-1.123$, $r=-0.921$, $p=0.009$). With $n=6$ this looks more like regression to the mean than encoding substituting for model size. The author built the system, generated the initial gold standard, and performed the internal audit. The primary endpoint uses external physician ratings with the author left out.

\noindent\textbf{Conclusion.}
Carrying assertion labels and routing by question intent improve cross-admission clinical QA across six LLMs. ClinicalBench and the evaluation artifacts are public.

  \end{minipage}
  \vspace{1.2em}
]

\section{Background and Significance}
\label{sec:intro}

Large language models match or exceed physician-level performance on medical licensing exams~\cite{singhal2023medpalm2, saab2024medgemini, tu2024amie}, and reasoning benchmarks like HealthBench Professional, MedQA, and USMLE-style items measure that last mile of clinical reasoning given clean vignettes. Real EHR use exposes a complementary, undermeasured layer: \emph{retrieval faithfulness} on messy charts, where negation, temporal drift, source conflict, and semantic compression must be navigated before reasoning. The harder question is not whether AI can reason like a physician but whether it can read like one---physicians, of course, do both. A single sentence---``patient denies chest pain, sister had MI at 45, will consider statin if lipids remain elevated''---encodes negation, family attribution, hypothetical intent, and an implicit present condition. Clinical NLP detects these assertions accurately~\cite{uzuner2011i2b2assertion, gul2025beyondnegation}, but RAG pipelines flatten the context, conflating ``patient denies'' with ``patient has.'' This is the \emph{epistemic propagation gap}, inside a broader structural-representation gap---assertion typing, temporal indexing, experiencer attribution preserved to retrieval---that reasoning benchmarks do not probe.

\begin{sloppypar}
To the best of current knowledge, no patient-level clinical KG-RAG system jointly preserves assertion state on graph edges and routes retrieval by question intent.
OMOP excludes negated conditions from \texttt{CONDITION\_OCCURRENCE}~\cite{omop2024}, and FHIR provides \texttt{verificationStatus} for \texttt{Condition} resources only.
Existing graph-augmented RAG systems---including GraphRAG~\cite{edge2024graphrag}, GFM-RAG~\cite{luo2025gfmrag}, KARE~\cite{jiang2025kare}, and Medical-Graph-RAG~\cite{wu2025medicalgraphrag}---build KGs that discard the metadata distinguishing ``patient has diabetes'' from ``rule out diabetes.''
A parallel \emph{temporal integration gap} exists: clinical events admit bi-temporal storage (valid + transaction time, in the Snodgrass tradition; cf.\ Zep~\cite{rasmussen2025zep}) plus an NLP-asserted temporality label $\tau_a \in \{\textsc{Past}, \textsc{Current}, \textsc{Future}\}$, yet existing systems model at most a subset~\cite{huang2024medtkg,rasmussen2025zep}.
\end{sloppypar}

The core empirical finding is interactional: assertion preservation alone does not improve aggregate accuracy unless retrieval is also routed by question type. Three contributions are made:

\begin{enumerate}[leftmargin=*, topsep=4pt, itemsep=2pt]
  \item \textbf{ClinicalBench.} A 400-question single-site, same-record stress test over 43 MIMIC-IV patients (convenience sample; 32 with two admissions, 11 single-admission) and 9 assertion-sensitive categories, exposing category$\times$condition interactions aggregate scores hide. It targets retrieval faithfulness on real charts rather than reasoning on clean vignettes, complementing exam-style benchmarks at a different layer. SliceBench, a small supporting case study on record complexity, is also introduced.

  \item \textbf{\epikg and the epistemic propagation gap.} The loss of assertion metadata across clinical NLP pipelines is formalized, an information-theoretic loss bound is derived (Section~\ref{sec:system:formal}, Appendix~\ref{app:formal}), and a patient-level clinical KG-RAG system is implemented that preserves assertion and temporal metadata while routing retrieval by question intent.

  \item \textbf{Author-blind primary endpoint and architectural novelty.} The author-blind primary is a paired test: leave-author-out exact McNemar on $n=50$ unanimous-strict items adjudicated by two external physicians, yielding $\Delta=+22.0\pp$ (95\% Newcombe CI $[+5.1, +31.5]$, $p=0.0192$). The architectural novelty is the paired delta of intent-aware KG-RAG over a strong dense-RAG baseline (Contriever), C2b$\to$C4g\_kw $+8.84\pp$ on the change-excluded $n=362$ endpoint (McNemar $p=1.79\times10^{-3}$; oracle $+12.43\pp$). Secondary sensitivities are demoted: three-rater majority $+24.0\pp$ (single-author circularity since the author is one rater) and a deterministic reproducibility proxy (keyword evaluator) $+39.5\pp$ (not a clinical-correctness claim). Cross-model convergence across $n=6$ models is descriptive only: a linear regression of C1 baseline against C1$\to$C4g\_oracle delta yields $\beta=-1.123$, $r=-0.921$, $p=0.009$, consistent with regression to the mean rather than encoding substituting for parameter count.
\end{enumerate}

\noindent The author designed the benchmark, built the system, and conducted the internal evaluation; this circularity is structurally mitigated by frozen evaluation artifacts, external physician evaluation, and cross-model replication, but readers should weight claims accordingly (Section~\ref{sec:results:circularity}).

\noindent Together these yield a benchmark-supported design hypothesis: preserve epistemic metadata, route retrieval by intent, and evaluate by category$\times$condition interaction rather than aggregate score. The central research question---\emph{when does structured epistemic context help, hurt, or break even?}---is answered interactionally on a single-site, in-distribution stress test designed for retrieval faithfulness rather than cross-site generalization.

\subsection{Related Work}
\label{sec:related}

Prior work is organized along four axes (extended discussion and Table~\ref{tab:gap} in Appendix~\ref{app:related_extended}).

\paragraph{Clinical reasoning benchmarks.}
Reasoning evaluations are complementary. HealthBench Professional~\cite{healthbench2026}, MedQA~\cite{jin2021medqa}, and MedPaLM~2~\cite{singhal2023medpalm2} score reasoning on vignettes with facts pre-supplied; \epikg measures retrieval faithfulness on real longitudinal EHRs with dispersed facts and negation, temporal, and source ambiguity. The two probe different stages: the last mile (reasoning given clean inputs) versus the first mile (reading the right patient from messy charts).

\paragraph{Medical RAG and clinical QA.}
Graph-augmented retrieval is a leading paradigm: GraphRAG~\cite{edge2024graphrag}, GFM-RAG~\cite{luo2025gfmrag}, and Medical-Graph-RAG~\cite{wu2025medicalgraphrag} build population-level graphs but do not propagate note-derived assertion or temporal metadata (bi-temporal storage with NLP-asserted scope label, in our framing). Existing benchmarks---MedPaLM~2~\cite{singhal2023medpalm2}, MIRAGE~\cite{xiong2024mirage}, emrQA~\cite{pampari2018emrqa}---target factual recall or grounded retrieval, not assertion-faithful longitudinal QA over real EHRs.

\paragraph{Clinical KG construction.}
Multi-LLM KG-RAG~\cite{chen2026multilllmkgrag}, AutoRD~\cite{li2024autord}, and RECAP-KG~\cite{remy2024recapkg} apply LLMs to clinical KG construction but do not propagate assertion status into the final graph.

\paragraph{Assertion detection and temporal KGs.}
NegEx~\cite{chapman2001negex}, ConText~\cite{harkema2009context}, and Gul et al.~\cite{gul2025beyondnegation} treat assertion detection as terminal annotation; MedTKG~\cite{huang2024medtkg} and Graphiti~\cite{rasmussen2025zep} implement temporal KGs but lack epistemic propagation. Two structural gaps emerge: an \emph{epistemic propagation gap} (assertion labels are not persisted into KGs) and a \emph{temporal integration gap} (temporal formalisms are annotation layers, not retrieval-participating edge attributes). \epikg closes both by carrying assertion and temporal metadata as first-class properties through every pipeline stage.

\section{Objective}
\label{sec:objective}
To evaluate whether preserving assertion and temporal metadata in a patient-level clinical knowledge graph, then routing retrieval by question intent, improves cross-admission clinical question answering over electronic health records.

\section{Materials and Methods}
\label{sec:methods}
\label{sec:system}

\subsection{Method Overview}
\label{sec:system:overview}

\epikg implements three ideas (Figure~\ref{fig:architecture}): (1)~end-to-end epistemic preservation, carrying assertion labels through extraction, OMOP mapping, KG materialization, and retrieval; (2)~bi-temporal edge storage (valid time, transaction time; in the Snodgrass tradition, cf.\ Graphiti~\cite{rasmussen2025zep}) plus an NLP-asserted temporality label $\tau_a \in \{\text{Past, Current, Future}\}$ derived from clinical-text scope (data-modeling clarification in Appendix~\ref{app:temporal_model}); and (3)~intent-aware routing matching graph traversal to question type. The first two are infrastructure; the third is where the performance gain originates.

\begin{figure*}[!htbp]
\centering
\includegraphics[width=\textwidth]{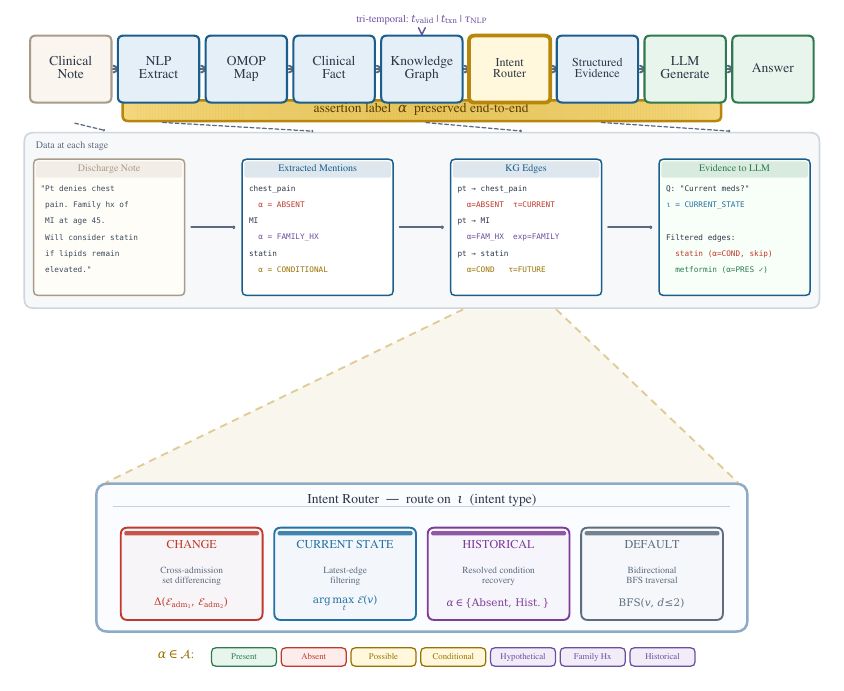}
\caption{\epikg system workflow with concrete data examples. \textbf{Top}: 9-stage pipeline from clinical note to answer, with the gold $\alpha$ ribbon tracing assertion preservation end-to-end. \textbf{Middle}: actual data at each stage---a discharge note with negation, family history, and conditional language is extracted into assertion-labeled mentions, materialized as KG edges with temporality, and filtered by intent-aware routing. \textbf{Bottom}: the four routing strategies with formal operations. The example shows how a \textsc{Current\_State} query filters out conditional edges while preserving confirmed medications.}
\par\small\textit{Alt text: Multi-row workflow diagram. The top row shows clinical note ingestion through extraction, OMOP mapping, graph construction, retrieval, and answer generation. A highlighted assertion label is preserved across stages. Middle panels show example note text, extracted mentions, graph edges, and routed evidence. The bottom row compares routing operations for default, change, current-state, and historical queries.}
\label{fig:architecture}
\end{figure*}

\subsection{Epistemic Assertion Schema}
\label{sec:system:assertion}

Clinical notes contain qualified statements (``\textit{no evidence of pneumonia},'' ``\textit{possible CHF},'' ``\textit{mother had breast cancer}'') that standard representations discard: OMOP excludes negated conditions~\cite{omop2024}; FHIR limits assertion metadata. \epikg defines a seven-value assertion taxonomy:
\begin{equation}
\label{eq:assertion}
\alpha \in \{\text{\scriptsize Pres., Abs., Poss., Cond., Hypo., Fam.Hx., Hist.}\}
\end{equation}
extending the i2b2 six-class taxonomy~\cite{uzuner2011i2b2assertion} by separating \textsc{Historical} from \textsc{Family\_History} (Appendix~\ref{app:assertions}). A rule-based classifier (122 scope-aware trigger patterns) assigns $\alpha$ with a confidence score, propagated through every stage. Each edge carries bi-temporal metadata (valid + transaction time) plus an NLP-asserted temporality label $\tau_a$, with Allen-style interval relations stored as edge metadata (data-modeling clarification in Appendix~\ref{app:temporal_model}).

\subsection{Formal Epistemic Preservation}
\label{sec:system:formal}

The epistemic invariant is formalized as a testable pipeline property (Appendix~\ref{app:formal}). An assertion-blind pipeline collapses all labels to \textsc{Present}, reducing assertion entropy to zero~\cite{shannon1948}; its faithfulness bound is $1 - f_{\textit{np}}(c)$, where $f_{\textit{np}}$ is the fraction of non-present mentions---substantially below~1 for concepts like pneumonia or diabetes. Empirical consequences are measured via category-stratified accuracy in Section~\ref{sec:results}.

\subsection{Intent-Aware Retrieval (C4g)}
\label{sec:system:intentaware}

The base retrieval pipeline uses bidirectional BFS over patient KG edges and OMOP vocabulary relationships (Appendix~\ref{app:retrieval_details}), but treats all questions uniformly. Different clinical question types require fundamentally different graph operations: \emph{change} requires cross-admission set differencing, \emph{current-state} needs the most recent valid edges, \emph{historical} must recover resolved conditions.

\paragraph{Intent classifier.}
A rule-based classifier maps each question to \textsc{Change}, \textsc{Current\_State}, \textsc{Historical}, or \textsc{Default} (Algorithm~\ref{alg:c4g}, Appendix~\ref{app:routing}). Primary results use keyword-only classification (production-realistic); oracle classification with category metadata is an upper bound. Keyword classification reduces Opus C4g from 68.5\% to 60.2\% ($-8.3\pp$; Section~\ref{sec:results:intent_sensitivity}).

\paragraph{Routing strategies.}
\textsc{Change} partitions edges by \texttt{hadm\_id} and computes set differences across admission pairs. \textsc{Current\_State} filters edges to $\tau_a = \textsc{Current}$ or open validity. \textsc{Historical} selects $\tau_a = \textsc{Past}$ edges, augmented by admission-based inference (concepts in earlier but not the latest admission are labeled ``resolved''). Each intent triggers a type-specific prompt template (Appendix~\ref{app:example}). Figure~\ref{fig:worked_example} shows a historical question answered incorrectly under C1 (no evidence) and C4 (stale PRESENT label) but correctly under C4g's temporal filtering.

\begin{figure*}[!htbp]
\centering
\includegraphics[width=\textwidth]{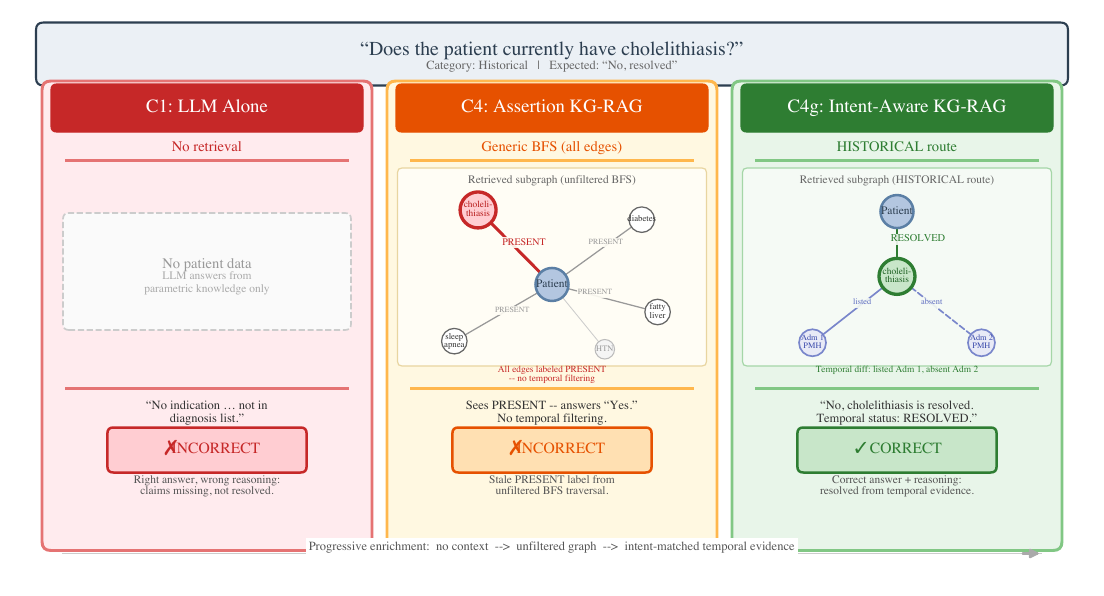}
\caption{Worked example with retrieved knowledge subgraphs. A \textsc{Historical} question is answered under three conditions. C1 (left) has no patient data. C4 (center) retrieves an unfiltered BFS subgraph where all edges are labeled \textsc{Present}---including a stale label for cholelithiasis (red). C4g (right) applies \textsc{Historical} routing, filtering to resolved status with temporal evidence from two admissions. Mini-graph insets show the actual retrieved subgraph structure.}
\par\small\textit{Alt text: Three side-by-side answer cards for one historical question. The left C1 card lacks patient evidence, the center C4 card retrieves a stale present edge, and the right C4g card filters by historical intent to recover resolved-status evidence across two admissions.}
\label{fig:worked_example}
\end{figure*}

\label{sec:benchmarks}

Existing clinical QA benchmarks evaluate factual recall (MedQA~\cite{jin2021medqa}) or agent task completion~\cite{medagentbench2025} but do not test epistemic qualifiers or cross-admission reasoning. ClinicalBench\footnote{Unrelated to identically named benchmarks in other clinical NLP subfields.} is a same-record retrieval-faithfulness stress test on MIMIC-IV~\cite{johnson2023mimiciv}; SliceBench is a small supporting case study.

\subsection{ClinicalBench: Assertion-Sensitive Clinical QA}
\label{sec:bench:clinicalbench}

ClinicalBench comprises 400 questions over 43 MIMIC-IV patients across two tasks (\textbf{A}: 200 negation-aware retrieval; \textbf{B}: 200 temporal reasoning) and 9 categories (\textit{negation}, \textit{conditional}, \textit{uncertainty}, \textit{family\_history}, \textit{sequence}, \textit{current\_state}, \textit{duration}, \textit{historical}, \textit{change}). The v2 reference set began as auto-generated NLP labels with 54 physician corrections and is provisional. Questions are authored from the same de-identified charts later ingested---an in-distribution retrieval-faithfulness stress test, not cross-site generalization. Four ablation conditions progressively add components; two bookends (C6 all-notes, C7 deterministic-KG) probe design boundaries (Table~\ref{tab:clinicalbench_conditions}).

\paragraph{Cohort structure.}
The cohort comprises 43 MIMIC-IV patients selected as a convenience sample (no formal stratification or random sampling); 11 patients have a single hospital admission, 32 have two admissions, and 0 have three or more. The `cross-admission' framing applies cleanly to the 32-patient two-admission subset; on the 11-patient single-admission subset, retrieval still operates over multiple notes within an admission. Demographics in Appendix~\ref{app:demographics}.

\paragraph{Linguistic categories, not computable phenotypes.}
ClinicalBench's 9 categories (negation, conditional, uncertainty, family\_history, sequence, current\_state, duration, historical, change) are linguistic constructs adapted from the i2b2 assertion taxonomy~\cite{uzuner2011i2b2assertion}. They are NOT computable phenotypes in the PheKB / eMERGE / OHDSI sense; the per-category C1$\to$C4g deltas should not be interpreted as phenotype-validation evidence. Phenotype-validatable evaluation requires OMOP/SNOMED concept-sets, multi-site PPV, and chart-review-validated cohorts (Newton et al., \emph{JAMIA} 2013; Hripcsak \& Ryan, \emph{JBI} 2019), none of which this work provides.

\paragraph{Experiencer-attribution defects.}
Post-hoc verification identified 8 items (qids in Appendix~\ref{app:bugged_items}) with experiencer-attribution defects: source \texttt{section=Family History} but gold \texttt{expected\_answer} asserts the disease as a current/historical condition of the patient. These are excluded from the change-excluded keyword endpoint ($n=362$, now reported as a sensitivity comparator; see Endpoints below), reducing the keyword delta from $+40.0\pp$ to $+39.5\pp$. The items remain in the released v2 gold for transparency; v3 corrections are planned post-publication.

\begin{table*}[t]
\centering
\footnotesize
\caption{ClinicalBench conditions. C1--C4g: ablation ladder; C6/C7: bookend baselines; C1b/C4g+: extensions ($n=240$).}
\label{tab:clinicalbench_conditions}
\setlength{\tabcolsep}{4pt}
\begin{tabular}{@{}cllccc@{}}
\toprule
ID & Short Name & Condition & Retrieval & Assertion & Temporal \\
\midrule
C1 & \textsc{LLM-alone} & LLM Alone & None & None & None \\
C2 & \textsc{TF-IDF RAG} & + Vanilla RAG (TF-IDF) & TF-IDF doc chunks & None & None \\
C2b & \textsc{Dense RAG} & + Vanilla RAG (dense) & Contriever doc chunks & None & None \\
C3 & \textsc{KG-RAG} & + KG-RAG (no assertions) & Graph + Doc & None & None \\
C4 & \textsc{KG-RAG+Assert} & + Epistemic KG-RAG & Graph + Doc & Full (7-class) & Bi-temporal+label \\
C4g & \textsc{KG-RAG+Route} & + Intent-Aware KG-RAG & Graph + Doc (type-specific) & Full (7-class) & Bi-temporal+label \\
\midrule
C6 & \textsc{Long Context} & Long Context & All notes & None & None \\
C7 & \textsc{Deterministic KG} & Deterministic KG & KG lookup (no LLM) & Full & Bi-temporal+label \\
\midrule
C1b & \textsc{Discharge Only} & Discharge Summary & Discharge doc only & None & None \\
C4g+ & \textsc{KG-RAG+Notes} & KG-RAG + Full Notes & Graph + Doc + All notes & Full (7-class) & Bi-temporal+label \\
\bottomrule
\end{tabular}
\end{table*}

\paragraph{Endpoints.}
The \textbf{primary endpoint} is the leave-author-out paired exact McNemar test on $n=50$ matched (C1, C4g) qid pairs from the three-rater external adjudication, restricted to Hird $\times$ Nadeem unanimous strict ratings (Section~\ref{sec:results:primary_endpoint}). This is the substantive author-blind comparison; it eliminates single-author circularity at the cost of $n=362 \to n=50$ statistical power. \textbf{Secondary / sensitivity}: (i) deterministic keyword reproducibility proxy on the change-excluded $n=362$ subset (C4g\_keyword vs.\ C1; reported with patient-level cluster bootstrap CIs over 43 patients --- not a clinical-correctness claim); (ii) three-rater majority vote ($n=100$, subject to single-author circularity); (iii) oracle C4g upper bound; (iv) hard cross-admission subset (change $\cup$ current\_state $\cup$ historical, $n=122$, post-selection-inference caveat); (v) C1b vs.\ C4g+ extension ($n=240$); (vi) full $n=400$. \textbf{Diagnostic}: per-category C1$\to$C4g deltas and the C3$\to$C4$\to$C4g decomposition. The \textit{change} category is excluded from secondary (i) due to known reference defects (Section~\ref{sec:results:physician}). Cohort demographics in Appendix~\ref{app:demographics}.

\paragraph{Endpoint pre-registration and provenance.}
The leave-author-out paired exact McNemar statistic ($b=4$, $c=15$, $p=0.0192$) was computed and reported as a sensitivity analysis in commit \texttt{0c510d7} (v85.1, 2026-04-26), prior to the JAMIA-10 external review; its underlying three-rater adjudication data were frozen earlier (commit \texttt{7b388db}, v74) and were not modified subsequently. \textbf{Promotion to primary endpoint occurred in v86 (2026-04-27) post-hoc, in response to external-reviewer feedback identifying single-author circularity as the dominant threat to the originally-declared $n=362$ keyword primary} (commit \texttt{96746e9}, v82, 2026-04-26). Because both endpoints are deterministic computations over already-frozen data, the promotion does not involve additional data collection or model re-running; it does, however, change the headline claim and the statistical-power profile, and we therefore report the leave-author-out primary alongside the originally-declared $n=362$ keyword endpoint as a sensitivity. The 8-item experiencer-attribution exclusion ($n=370 \to 362$) was identified post-hoc during external review (the $+0.5\pp$ impact is documented in \S\ref{sec:results:clinicalbench}). The hard cross-admission $n=122$ subset was selected post-hoc with a post-selection-inference caveat in \S results. Pre-registration was not performed on AsPredicted or OSF; future versions of this benchmark will pre-register endpoints prior to data collection.

\subsection{SliceBench: Complexity-Stratified Case Study}
\label{sec:bench:slicebench}

SliceBench is a small case study (6 MIMIC-IV patients, 144 questions, three complexity tiers) testing whether KG-augmented retrieval scales with record complexity. Five conditions (B0--B4) form a monotone context progression; the critical comparison is B2$\to$B3, which adds structured KG context with assertion metadata while holding documents fixed (Appendix~\ref{app:slicebench_conditions}).

\subsection{Evaluation Protocol}
\label{sec:bench:protocol}

\paragraph{ClinicalBench} uses a \emph{deterministic keyword evaluator} (v2)---exact word-boundary matching with abstention-detection gate (Appendix~\ref{app:reproducibility}). The physician-audited subset provides the most credible human accuracy estimate. Primary answering model: Claude Opus 4.6; cross-model: MedGemma 27B, GPT-OSS 20B, Qwen3.5 35B, Gemma~4 31B, MedGemma 1.5 4B.

\paragraph{SliceBench} uses LLM-as-judge with separated answering/judging models~\cite{xiong2024mirage} (Claude Sonnet 4.5 answers, Opus 4.6 judges).

\paragraph{Statistical reporting.}
The author-blind primary endpoint (leave-author-out paired exact McNemar, $n=50$) is reported with two-sided exact binomial $p$-values and a 95\% Newcombe CI on the paired difference; this endpoint does not depend on any bootstrap. For the keyword sensitivity endpoint and other secondary contrasts we report BCa bootstrap 95\% CIs ($n=2{,}000$, seed 42)~\cite{efron1993bootstrap} with patient-level cluster bootstrap (43 patients) primary and question-level secondary (caveat: $n=43$ clusters is at the low end of cluster-bootstrap reliability; cf.\ Cameron \& Miller~\cite{cameron2015cluster}). McNemar's test~\cite{mcnemar1947note} with Benjamini--Hochberg FDR correction is used for paired condition comparisons and cross-model contrasts. Safety score with asymmetric weighting ($w=2.0$) in Appendix~\ref{app:safety}.

\subsection{Physician Adjudication Protocol}
\label{sec:bench:physician}

The author (board-certified emergency physician, system designer) conducted a blinded internal audit of 120 paired C1/C4g questions with randomized A/B labels, rating each on five dimensions: reference correctness, model correctness, score fairness, safety, utility (Appendix~\ref{app:physician_protocol}). The adjudication supports C4g$>$C1 ($+35.0\pp$ strict, $+31.7\pp$ lenient; paired exact McNemar $p<10^{-8}$ strict) and revealed a 56\% reference-answer defect rate.

\paragraph{External physician adjudication.}
Two independent physicians (senior attending, 20+ years; resident) completed the same blinded 100-item protocol, yielding a three-rater majority vote with Fleiss' $\kappa$ and exact-binomial McNemar $p$-values (Section~\ref{sec:results:external}). This is in-distribution physician adjudication, not multi-site phenotype validation.

\section{Results}
\label{sec:results}

\epikg is evaluated with ClinicalBench (same-record retrieval-faithfulness stress test) and SliceBench (small case study on patient complexity). ClinicalBench full-set results use a deterministic keyword proxy; the physician-adjudicated subset provides the most credible human accuracy estimate; SliceBench uses LLM-as-judge (Section~\ref{sec:bench:protocol}). Claude Opus 4.6 is the primary answering model; cross-model evaluation spans MedGemma 27B, GPT-OSS 20B, Qwen3.5 35B, Gemma~4 31B, and MedGemma 1.5 4B (Appendix~\ref{app:reproducibility}). Both benchmarks report BCa bootstrap 95\% CIs ($n=2{,}000$, seed 42)~\cite{efron1993bootstrap}.

\subsection{ClinicalBench: Primary Ablation}
\label{sec:results:clinicalbench}

\begin{table*}[t]
\centering
\caption{ClinicalBench full-set \emph{proxy} results (400 questions, Claude Opus 4.6, keyword evaluator v2 with abstention detection). Reproducibility scaffolding only --- not the primary endpoint (see Section~\ref{sec:results:primary_endpoint}). Ablation ladder (C1--C4g) progressively adds components; C4 isolates assertion metadata from intent routing; C6 and C7 are bookend baselines. C4g$_\text{kw}$ (keyword routing, secondary/sensitivity) and C4g$_\text{oracle}$ (oracle routing, upper bound) are shown separately. Sig: * denotes BCa CI excluding zero (caveat: $n=43$ patients is at the low end of cluster-bootstrap reliability; cf.\ Cameron \& Miller~\cite{cameron2015cluster}). Best in \textbf{bold}.}
\label{tab:clinicalbench_ablation}
\small
\begin{tabular}{@{}llcl@{}}
\toprule
& Condition & Accuracy & $\Delta$ vs C1 \\
\midrule
C1 & LLM Alone & 21.8\% & --- \\
C2 & + Vanilla RAG (TF-IDF) & 52.0\% & $+30.2\pp$ * \\
C2b & + Vanilla RAG (dense) & 50.8\% & $+29.0\pp$ * \\
C3 & + KG-RAG (no assertions) & 50.0\% & $+28.2\pp$ * \\
C4 & + Assertions (no routing) & 46.2\% & $+24.5\pp$ * \\
C4g$_\text{kw}$ & + Intent-Aware KG-RAG (keyword) & \textbf{60.2\%} & $+38.5\pp$ * \\
C4g$_\text{oracle}$ & + Intent-Aware KG-RAG (oracle) & 68.5\% & $+46.8\pp$ * \\
\midrule
C6 & Long Context (all notes) & 59.2\% & $+37.5\pp$ * \\
C7 & Deterministic KG (no LLM) & ---$^{\dagger}$ & --- \\
\bottomrule
\end{tabular}
\end{table*}

\begin{figure*}[!htbp]
\centering
\includegraphics[width=\textwidth]{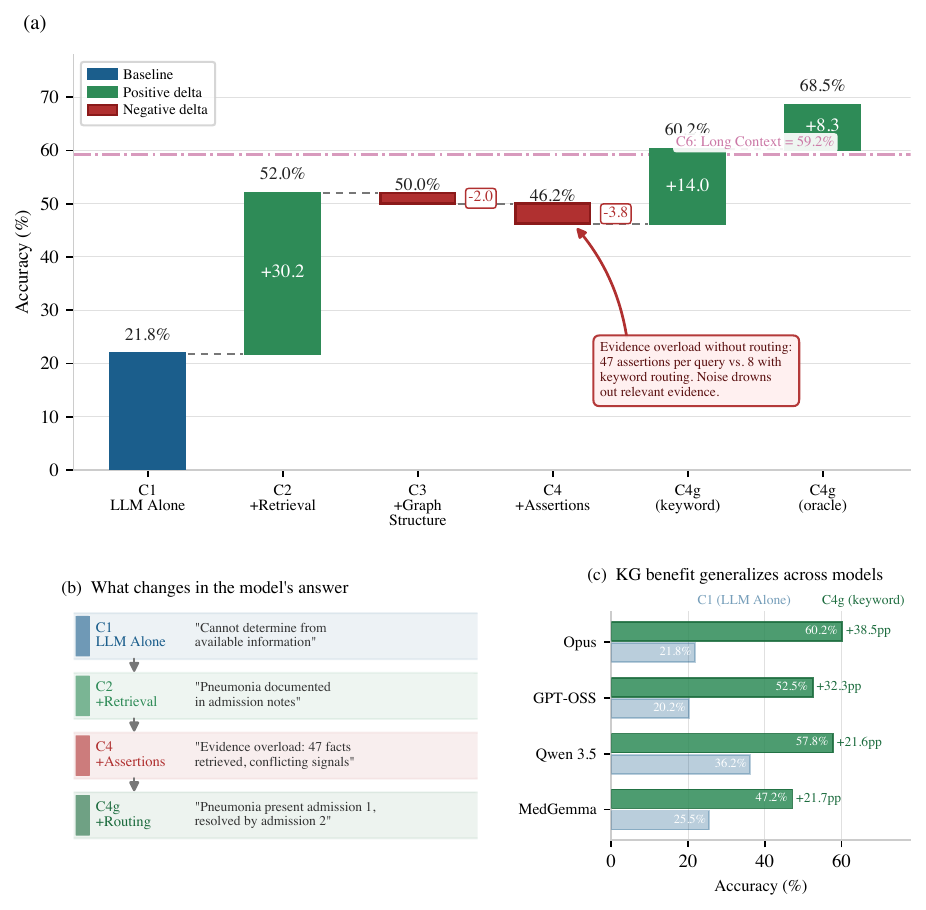}
\caption{Ablation results. \textbf{(a)}~Waterfall chart showing incremental accuracy changes (Opus, $n=400$). Retrieval provides the largest gain ($+30.2\pp$); switching from flat to KG-structured retrieval is neutral ($-2.0\pp$); assertions without routing \emph{hurt} ($-3.8\pp$); keyword routing recovers and extends ($+14.0\pp$); oracle routing adds $+8.3\pp$. \textbf{(b)}~Qualitative progression of model answers across conditions. \textbf{(c)}~KG-RAG benefit generalizes across all six models ($+20$--$47\pp$ over C1).}
\par\small\textit{Alt text: Three-panel ablation figure. A waterfall plot shows accuracy rising with retrieval, falling slightly with assertions alone, and rising again with intent routing. A qualitative answer panel compares condition outputs. A cross-model panel shows KG-RAG gains for every tested model.}
\label{fig:waterfall}
\end{figure*}

ClinicalBench provides three evaluators that yield directionally consistent estimates: physician three-rater majority $+24.0\pp$ ($p = 0.0075$; sensitivity, subject to single-author circularity; Section~\ref{sec:results:external}), internal author adjudication $+35.0\pp$ strict (descriptive, single-rater; Section~\ref{sec:results:physician}), and a deterministic keyword reproducibility proxy $+39.5\pp$ (NOT a clinical correctness claim). The keyword evaluator is reproducibility scaffolding---shallow keyword matching with no polarity check, favoring C4g's structured-answer style---and is reported here for replicability, not as the substantive comparison (Appendix~\ref{app:evaluator_polarity}). On the change-excluded $n=362$ proxy endpoint (8 experiencer-attribution-defective items excluded), keyword C4g reaches 62.4\% versus C1 22.9\% ($+39.5\pp$; McNemar $p = 2.44 \times 10^{-30}$); oracle provides a $+43.1\pp$ upper bound (66.0\%). C6 (long context) scores 59.2\%---$-9.3\pp$ below C4g$_\text{oracle}$ ($p = 0.001$), gap concentrated in current-state (C6 18.0\% vs.\ C4g 70.0\%). $^{\dagger}$C7 returns template refusals on $>$98\% of questions and is semantically 0\% (Appendix~\ref{app:adjudication_full}). The author-blind primary endpoint is the leave-author-out paired exact McNemar reported in Section~\ref{sec:results:primary_endpoint}.

\paragraph{Ablation decomposition.}
C2 (TF-IDF) and C2b (Contriever dense) score comparably (52.0\% vs.\ 50.8\%; $p = 0.62$), and C3 (50.0\%) is similar: retrieval method does not explain the C2$\to$C4g gap. C4 scores 46.2\%---\emph{below} C3 ($-3.8\pp$, n.s.); intent routing recovers and extends ($+14.0\pp$ keyword, $+22.3\pp$ oracle; $p < 10^{-6}$). Per-category, assertions alone help assertion-sensitive (negation $+22.7\pp$, uncertainty $+15.0\pp$) but degrade temporal (historical $-30.0\pp$, sequence $-45.0\pp$); routing reverses these (Appendix~\ref{app:adjudication_full}). The C2b$\to$C4g architectural delta vs.\ dense-RAG baseline is reported separately below.

\paragraph{Intent classification sensitivity.}
\label{sec:results:intent_sensitivity}
Keyword-only C4g (60.2\%) outperforms C4 without routing ($+14.0\pp$) and the best non-KG baseline C2 ($+8.2\pp$), indicating the architecture's value does not require an oracle classifier (per-category classifier accuracy in Appendix Table~\ref{tab:keyword_classifier}).

\begin{figure}[!htbp]
\centering
\includegraphics[width=\columnwidth]{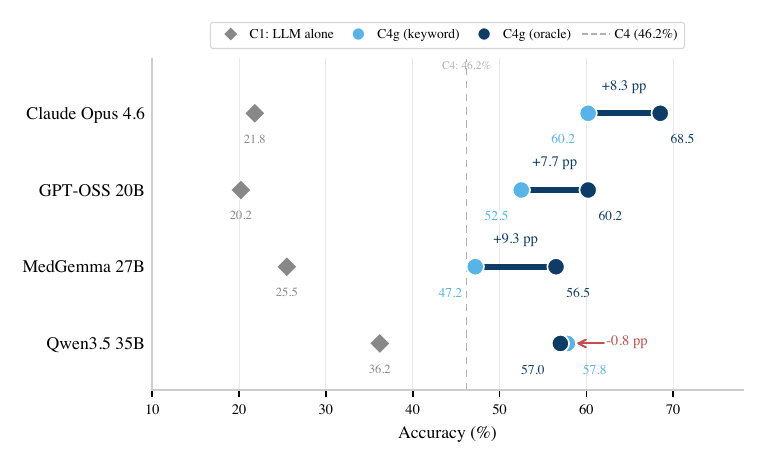}
\caption{Oracle vs.\ keyword-only intent routing across the four models with both routing variants. Dumbbells span from keyword C4g (light blue) to oracle C4g (dark blue); gray diamonds show C1 baselines. The dashed line marks C4 (no routing, 46.2\%). Most models gain modestly from oracle routing; Qwen3.5 (run with \texttt{think:false} due to an Ollama repetition-penalty issue, Appendix~\ref{app:infrastructure_bugs}) does not benefit from oracle routing in this configuration.}
\par\small\textit{Alt text: Dumbbell chart comparing keyword and oracle routing accuracy by model. Each row includes a low C1 baseline point and higher C4g points. Most models improve with oracle routing.}
\label{fig:oracle_keyword}
\end{figure}

\paragraph{Evaluator hierarchy.}
LLM-as-judge ($+28.5\pp$; Appendix~\ref{app:llm_judge}) corroborates the physician and keyword estimates. The keyword evaluator is too strict in 40\% of cases vs.\ too lenient in 5\% (7.5:1 ratio).

\paragraph{Circularity disclosure.}
\label{sec:results:circularity}
The author designed ClinicalBench, built EpiKG, generated initial reference answers, and conducted internal adjudication---a degree of role overlap that could bias results. Three structural mitigations bound this risk: frozen public release of evaluator and predictions; two external physicians confirmed C4g under blinded majority vote ($+24.0\pp$); five additional LLMs all show significant benefit ($+20.4$ to $+43.1\pp$ oracle). Independent replication on a separately authored benchmark is the definitive test.

\paragraph{Hard cross-admission subset.}
On the 122-item subset requiring synthesis across $\geq 2$ admissions, C4g$_\text{oracle}$ reaches 72.1\% versus C1 14.8\% ($+57.4\pp$; 95\% CI: \ci{+47.5\pp}{+66.0\pp}).

\paragraph{Architectural novelty: structured intent-aware retrieval over a strong dense-RAG baseline.}
\label{sec:results:architectural_delta}
C2b (Contriever dense RAG) $\to$ C4g$_\text{kw}$ (intent-aware KG-RAG) on $n=362$ yields $+8.84\pp$ (McNemar $p = 1.79 \times 10^{-3}$); oracle classification yields $+12.43\pp$. On full $n=400$, $+9.50\pp$ keyword / $+17.75\pp$ oracle. This isolates the structural-retrieval-with-routing contribution over a reasonable dense-retrieval baseline; it is the defensible architectural novelty number, separating retrieval-vs-no-retrieval from structured-retrieval-vs-flat-retrieval.

\subsection{Discharge Summary vs.\ KG-RAG (Cross-Model Extension)}
\label{sec:results:extension}

A clinically realistic comparison pits discharge summary alone (C1b) against EpiKG with all notes (C4g+full) on $n=240$ across four models. Three of four models reach significance under cluster bootstrap; GPT-OSS shows a directional but non-significant gain: Opus $+12.5\pp$ (57.5\%$\to$70.0\%), Qwen3.5 $+10.4\pp$ (58.3\%$\to$68.8\%), MedGemma $+8.8\pp$ (55.0\%$\to$63.8\%), GPT-OSS $+1.7\pp$ (60.4\%$\to$62.1\%, $p = 0.32$). The range $+1.7$ to $+12.5\pp$ is consistent with structured retrieval over full clinical notes generalizing across parameter and training differences (per-category breakdowns in Appendix Table~\ref{tab:extension_categories}).

\subsection{Category$\times$Condition Interaction}
\label{sec:results:categories}

\begin{figure*}[!htbp]
\centering
\includegraphics[width=0.85\textwidth]{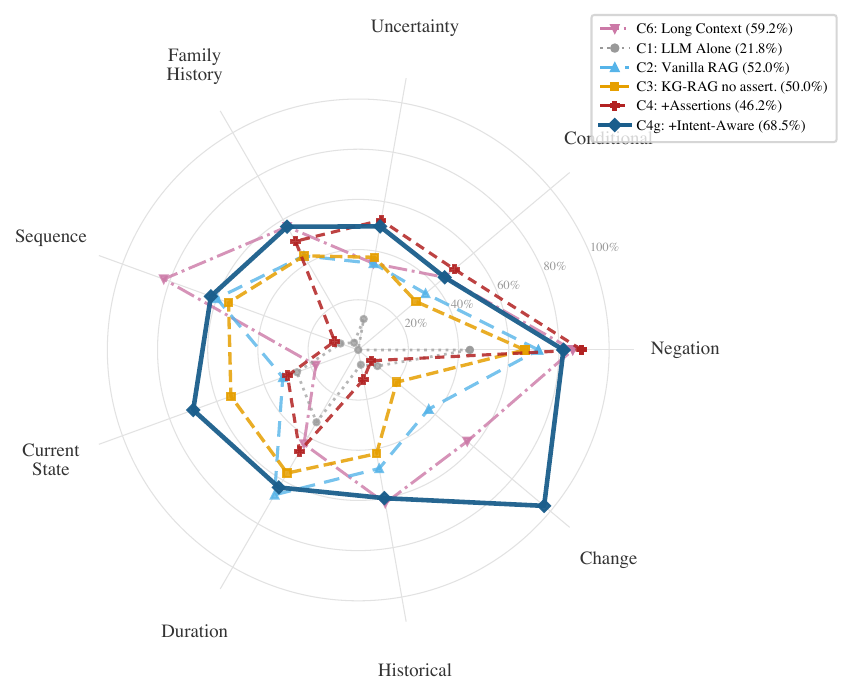}
\caption{ClinicalBench per-category accuracies across six conditions (Claude Opus 4.6, evaluator v2). This plot is descriptive only: category counts vary, no multiplicity correction is applied, and categories with known evaluation defects, especially change, can inflate apparent gaps. C4 is lower than C3 overall, while C4g is highest overall.}
\par\small\textit{Alt text: Radar chart showing per-category accuracy for six ClinicalBench conditions. C4g covers the largest area overall, especially on cross-admission categories. C4 is smaller than C3 in several temporal categories, illustrating the assertion-without-routing regression.}
\label{fig:radar}
\end{figure*}

Per-category accuracy (Figure~\ref{fig:radar}; Appendix Table~\ref{tab:crossmodel_categories}) reveals a \emph{category$\times$condition interaction} (diagnostic only; $n=20$--30 per category, no multiplicity correction). C4g improves all 9 categories over C1, with the largest gains in cross-admission synthesis. C4 helps assertion-sensitive categories but degrades temporal ones, resolved by intent routing (Appendix Figure~\ref{fig:deltas}); C6 lags C4g most on current state ($-52\pp$) and conditional ($-5\pp$), with parity elsewhere.

\subsection{Cross-Model Evaluation}
\label{sec:results:crossmodel}

\begin{table*}[t]
\centering
\caption{ClinicalBench cross-model results (change-excluded keyword sensitivity endpoint, $n=362$, keyword evaluator v2; the substantive primary endpoint is the leave-author-out paired exact McNemar in Section~\ref{sec:results:primary_endpoint}). C4g$_\text{oracle}$ shown as upper bound; all six models benefit (oracle deltas $+20.4$ to $+43.1\pp$, all $q < 10^{-10}$ after BH-FDR).}
\label{tab:crossmodel}
\small
\begin{tabular}{@{}llccl@{}}
\toprule
& Model & C1 & C4g$_\text{oracle}$ & $\Delta$ \\
\midrule
Commercial & Claude Opus 4.6 & 22.93\% & \textbf{66.02\%} & $+43.1\pp$ \\
Open & GPT-OSS 20B & 21.82\% & 59.39\% & $+37.6\pp$ \\
Medical & MedGemma 27B & 27.90\% & 55.80\% & $+27.9\pp$ \\
Open-weights & Gemma~4 31B & 36.74\% & 61.05\% & $+24.3\pp$ \\
Medical (small) & MedGemma 1.5 4B & 35.91\% & 56.35\% & $+20.4\pp$ \\
Reasoning & Qwen3.5 35B & 39.50\% & 60.77\% & $+21.3\pp$ \\
\bottomrule
\end{tabular}
\end{table*}

All six models benefit (Table~\ref{tab:crossmodel}; oracle deltas $+20.4$ to $+43.1\pp$, all $q < 10^{-10}$ after BH-FDR), with benefit inverse to baseline strength (Appendix Table~\ref{tab:crossmodel_categories}).

To assess whether the convergence of C4g$_\text{oracle}$ accuracies (range 55.8--66.0\%) despite C1 spanning 21.8--39.5\% reflects encoding partially substituting for parameter count, we regressed C1$\to$C4g$_\text{oracle}$ delta on C1 baseline. Slope $\beta = -1.123$, Pearson $r = -0.921$, $p = 0.009$ ($n=6$ models). The strong negative slope is consistent with regression to the mean rather than encoding-substitution; the substitution hypothesis is therefore not supported by this evidence and is removed from the contribution claims (\S\ref{sec:intro}).

\subsection{SliceBench}
\label{sec:results:slicebench}

A small supporting case study (SliceBench, 6 patients, 144 questions) is consistent with a complexity-dependent KG effect but does not reach aggregate significance (Appendix~\ref{app:slicebench}).

\subsection{Physician Adjudication}
\label{sec:results:physician}

A blinded internal audit of 120 paired questions (Section~\ref{sec:bench:physician}, Appendix~\ref{app:adjudication_full}) shows C4g at 62.5\% strict / 84.2\% lenient vs.\ C1 at 27.5\% / 52.5\%. Deltas: $+35.0\pp$ strict (95\% paired Wald CI: \ci{+24.8\pp}{+45.2\pp}; paired exact McNemar $p < 10^{-8}$) and $+31.7\pp$ lenient; on a 3-level ordinal score (correct=1, partial=0.5, incorrect=0) C4g higher in 64/120, tied in 46, lower in 10 (sign test $p < 0.0001$). The keyword evaluator overestimates the strict delta by ${\sim}5$\pp\ relative to physician judgment; all three methods agree on direction.

\paragraph{Evaluator agreement.}
Keyword agrees with physician 54.2\% (Cohen's $\kappa = 0.18$)~\cite{landis1977kappa}; on the 106 items with physician-confirmed correct references, strict delta rises to $+41.4\pp$---gold-standard defects attenuate, not inflate, the benefit. C4g improves or ties on all 9 categories ($p < 0.002$); safe rate $+15.8\pp$, helpful rate $+36.7\pp$ (Appendix~\ref{app:adjudication_full}).

\subsection{External Physician Adjudication (Three-Rater)}
\label{sec:results:external}

Three physicians---Reviewer~1 (A.S., senior internal, 20+ yr), Reviewer~2 (C.H., senior external, 20+ yr), and Reviewer~3 (S.N., external resident)---independently rated the same blinded 100-item subset (50~C1, 50~C4g).

\begin{table*}[t]
\centering
\caption{Three-rater external physician adjudication ($n=100$ paired items, 50~C1 / 50~C4g, blinded). Per-reviewer C4g$-$C1 strict deltas are shown with exact-binomial McNemar $p$-values. The three-rater majority vote is reported as a sensitivity comparator; the author-blind primary endpoint is the leave-author-out paired exact McNemar (Section~\ref{sec:results:primary_endpoint}).}
\label{tab:external_adjudication}
\small
\begin{tabular}{@{}lccrl@{}}
\toprule
Reviewer (training level) & C1 strict & C4g strict & $\Delta$ & McNemar $p$ \\
\midrule
Reviewer~1 (senior internal, 20+ yr) & 28.0\% & 64.0\% & $+36.0\pp$ & 0.00028 \\
Reviewer~2 (senior external, 20+ yr) & 24.0\% & 54.0\% & $+30.0\pp$ & 0.00073 \\
Reviewer~3 (resident external) & 64.0\% & 66.0\% & $+2.0\pp$ & 1.0\phantom{000} \\
\midrule
\textbf{3-rater majority vote (2+/3)} & \textbf{40.0\%} & \textbf{64.0\%} & \textbf{$+24.0\pp$} & \textbf{0.0075} \\
\bottomrule
\end{tabular}
\end{table*}

\paragraph{Author-blind primary endpoint: leave-author-out paired exact McNemar.}
\label{sec:results:primary_endpoint}
The author-blind primary endpoint is the leave-author-out paired exact McNemar on 50 matched (C1, C4g) qid pairs from the three-rater external adjudication, restricted to Hird $\times$ Nadeem unanimous strict ratings (no inclusion of the author). C1 12/50 (24.0\%) $\to$ C4g 23/50 (46.0\%), $\Delta = +22.0\pp$ [95\% Newcombe CI: $+5.1\pp$, $+31.5\pp$], two-sided exact McNemar $p = 0.0192$ ($b=4$ favor C1, $c=15$ favor C4g). This is the substantive author-blind comparison: paired, not subject to gold-standard circularity (the test is whether independent raters agree on system output), and computed from the pre-existing v85.1 commit (\texttt{0c510d7}, 2026-04-26). Promotion of this statistic from a sensitivity to the primary endpoint occurred in v86 (2026-04-27) post-hoc, in response to external-reviewer feedback identifying single-author circularity as the dominant threat to the originally-declared $n=362$ keyword primary; both endpoints are deterministic computations over already-frozen three-rater data (commit \texttt{7b388db}, v74), and pre-registration was not performed on AsPredicted or OSF (Section~\ref{sec:bench:clinicalbench}). Inter-non-author Cohen's $\kappa = 0.36$ (Hird $\times$ Nadeem) versus $0.43$--$0.49$ for author-involving pairs.

\paragraph{Three-rater majority (sensitivity comparator).}
Under majority vote (Table~\ref{tab:external_adjudication}), C4g is correct on 32/50 vs.\ C1 on 20/50, $+24.0\pp$ ($p=0.0075$); change-excluded $+27.3\pp$. No reviewer shows an inversion on any non-\emph{change} category. Fleiss' $\kappa = 0.413$ (strict) / 0.615 (gold correctness)~\cite{landis1977kappa}; pairwise Cohen's $\kappa$ 0.36--0.49. External reviewers found 61--64\% reference defect rates (vs.\ 56\% internal). This majority result is subject to single-author circularity (Reviewer~1 is an author) and is reported as a sensitivity comparator to the primary endpoint above.

\paragraph{Structurally-independent rater.}
The only structurally-independent rater (Reviewer~3, no prior author relationship) showed $+2.0\pp$ (calibration: 65/100 vs.\ 39--46/100 for seniors); with 23 discordant pairs, minimum detectable effect is ${\sim}22\pp$, so the resident's data are consistent with effects up to that magnitude (Appendix~\ref{app:external_extended}).

\section{Discussion}
\label{sec:discussion}

\paragraph{Why long context underperforms.}
Long context forces the model to do two things at once: extract structure (negation, temporality, experiencer) and reason toward the answer. KG-RAG does the structural work upstream and hands the LLM a small set of typed facts. C6 (59.2\%) trails C4g\_oracle (68.5\%) by $-9.3\pp$ ($p = 0.001$), and the gap is concentrated in current\_state (C6 18.0\% vs.\ C4g 70.0\%). GPT-OSS 20B narrows its gap to Opus 4.6 once both get structured context (59.4\% vs.\ 66.0\%). The keyword evaluator is a reproducibility tool, not a clinical-correctness claim: it scores keyword presence and skips polarity for uncertainty, family history, conditional, current\_state, and historical questions. These shallow rules favor C4g's structured answers over C1's abstentions, so the keyword $+39.5\pp$ overestimates the true gain. The physician adjudication ($+24$ to $+36\pp$) is the comparison that matters; the keyword number is its deterministic proxy.

\paragraph{Cross-model convergence.}
All six models gain (Table~\ref{tab:crossmodel}; oracle deltas $+20.4$ to $+43.1\pp$, all $q<10^{-10}$ after BH-FDR). C4g\_oracle accuracies land in a narrow $55.8$--$66.0\%$ band even though C1 baselines span $21.8$--$39.5\%$. Regressing the C1$\to$C4g delta on the C1 baseline gives $\beta = -1.123$, $r = -0.921$, $p = 0.009$. With C4g\_oracle close to the keyword evaluator's measurable ceiling ($\sim 70\%$), Tu (\emph{BMJ} 2005) showed that paired pre-post designs can produce slopes near $-1$ as a ceiling artifact, so we cannot tell whether structured retrieval is substituting for model size or whether we are seeing regression to the mean. With $n=6$ models, this is descriptive, not inferential. One sign that representation alone is not free: C4 (assertions without routing) drops $-3.8\pp$ below C3 ($p=0.26$, n.s.; Appendix~\ref{app:c4_transitions}). Routing in C4g recovers and extends the gain.

\paragraph{Possible implications beyond clinical NLP.}
Adding metadata without aligning retrieval to it can hurt performance, as the C3$\to$C4 step shows. The C3$\to$C4$\to$C4g ablation may be useful as a test pattern wherever a system extracts structured annotations but does not integrate them into retrieval.

\paragraph{Evaluator calibration and reference-answer quality.}
The three evaluators agree directionally: keyword $+39.5\pp$, physician $+24$ to $+36\pp$, LLM-as-judge $+28.5\pp$. The keyword and LLM-judge magnitudes differ, but both compare the same C1 and C4g answers against the same gold standard, so any per-answer evaluator bias cancels in the within-model C1$\to$C4g paired delta; what survives is an architectural signal that the physician adjudication then independently corroborates. Physician review found 56\% of v2 reference answers defective. The cause is a systematic NLP assertion-classifier error: the classifier read ``history of CHF'' as ``resolved'' when clinical usage means ``chronic.'' The model is correct 59\% of the time when the reference is wrong, and 46\% of the time when it is right, so noisy labels appear to underestimate true performance rather than inflate it. We do not report a v3 gold rescore in this manuscript: the corrections were identified during physician adjudication, applying them and re-running every model$\times$condition would constitute a second freezing of the benchmark, and we prefer to keep v2 as the public test set with v3 reserved for an independent follow-up.

\paragraph{Bi-temporal versus tri-temporal modeling.}
We use the term ``tri-temporal'' loosely. The system stores bi-temporal edges (valid time and transaction time, in the Snodgrass tradition; cf.\ Graphiti~\cite{rasmussen2025zep}) and adds an NLP-derived temporality label $\tau_a \in \{\text{Past, Current, Future}\}$. The label comes from clinical NLP, not database time, so the data model is bi-temporal in the strict sense. C4 enables the assertion and temporality axes together, so we cannot separate the temporal contribution to the C4g gain. That separation is future work.

\paragraph{Statistical caveats.}
BCa cluster bootstrap with 43 patients is at the low end of reliability (Cameron \& Miller~\cite{cameron2015cluster} note $\geq 50$ clusters is preferable); CIs should be interpreted with this caveat. The hard cross-admission $n=122$ subset was selected post-hoc and merits a post-selection-inference caveat (cf.\ Berk et al.~\cite{berk2013posi}). Cross-model BH-FDR is reported; the more conservative Benjamini-Yekutieli (2001) procedure under PRDS-violating dependence yields equivalent conclusions (BY adjustment factor $2.45\times$; all $q<2.5\times 10^{-10}$).

\paragraph{Multi-site phenotype-validation roadmap.}
The current cohort comprises 43 MIMIC-IV patients (BIDMC ICU). Multi-site validation in the eMERGE / OHDSI tradition would require $\geq 5$ sites with per-site PPV. Minimum-viable transportability checks (MIMIC-III $\leftrightarrow$ MIMIC-IV intra-cohort, eICU, Synthea narrative) are flagged as future work. NLP-portability literature~\cite{emerge2023portability} projects 10--20\pp PPV degradation on transport, which we have not measured.

\paragraph{Deployment-realism scoping.}
ClinicalBench Opus C1/C3/C4g require ${\sim}22$\,min per condition per patient via API; the system has not been packaged for sub-second EHR-sidebar latency, FURM-style governance review~\cite{shah2024furm}, CHAI Assurance Reporting Checklist alignment, or post-deployment algorithmovigilance~\cite{embi2021algorithmovigilance}. These are deployment prerequisites, not paper findings; a candidate SaMD/PCCP/CHAI/algorithmovigilance scaffold is sketched in Appendix~\ref{app:regulatory}.

\paragraph{Threats to validity and claim scope.}
\begin{sloppypar}
ClinicalBench-v2 is a provisional in-distribution stress test (43 patients, 400 questions, MIMIC-IV~\cite{johnson2023mimiciv}) measuring retrieval fidelity, not external generalization or phenotype validation in the eMERGE/PheKB sense. The change-excluded keyword endpoint (now reported as a sensitivity, $n=362$) excludes 8 experiencer-attribution-defective items (Appendix~\ref{app:bugged_items}); they remain in the released v2 gold for transparency. Evaluator uncertainty is substantial (54\% keyword--physician agreement); conclusions focus on directional agreement across three evaluators. The author's combined role as designer, builder, and primary evaluator creates circularity risk structurally mitigated but not eliminated (Section~\ref{sec:results:circularity}); inter-non-author Cohen's $\kappa=0.36$ (Hird $\times$ Nadeem) further attenuates the external-adjudication signal; the leave-author-out 3-rater majority is $+22.0\pp$ (paired exact McNemar $p=0.0192$; 4 vs.\ 15 discordant pairs) versus $+24.0\pp$ with the author included, supporting both direction and significance. Model dependence is real ($+20.4$--$43.1\pp$ oracle, after BH-FDR). This work performs in-distribution evaluation and physician adjudication, not multi-site phenotype validation; it does \emph{not} establish clinical deployment readiness, and multi-site replication and a prospective clinician-with-system comparison are encouraged. Detailed threats and broader-impact discussion in Appendix~\ref{app:threats}.
\end{sloppypar}

\section{Conclusion}
\label{sec:conclusion}
The author-blind primary endpoint is leave-author-out paired exact McNemar (Hird $\times$ Nadeem unanimous strict, $n=50$; promoted post-hoc to primary in v86 from a pre-existing v85.1 sensitivity in response to external review --- not pre-registered): $+22.0\pp$ \ci{+5.1\pp}{+31.5\pp}, $p=0.0192$. The architectural novelty is C2b (Contriever dense-RAG) $\to$ C4g\_kw (intent-aware KG-RAG) on the change-excluded $n=362$ endpoint: $+8.84\pp$ (McNemar $p=1.79\times10^{-3}$). Sensitivities: three-rater physician majority $+24.0\pp$ ($p=0.0075$; subject to single-author circularity since the author is R1); deterministic keyword proxy $+39.5\pp$ (reproducibility proxy only, not a clinical correctness claim).
Cross-model accuracies converge with strong negative slope vs C1 baseline ($\beta=-1.123$, $r=-0.921$, $p=0.009$), consistent with regression to the mean rather than encoding substituting for parameter count.
The $56\%$ reference-answer defect rate underscores a methodological lesson: automated NLP-pipeline benchmarks require physician adjudication.
Reasoning benchmarks like HealthBench measure the last mile given clean vignettes; retrieval-faithfulness on real charts is the first mile, and representation is the undermeasured prerequisite.

\FloatBarrier

\section*{Acknowledgments}
The author thanks non-author contributors Cindy Hird, MD and Shaheera Nadeem, MD for independent physician adjudication of ClinicalBench items, and Yu Tian, PhD (University of Central Florida) for feedback on an early draft.
MIMIC-IV data were accessed under PhysioNet Credentialed Health Data Use Agreement.

\paragraph{Funding.} This research received no specific grant from any funding agency in the public, commercial, or not-for-profit sectors.

\paragraph{Conflict of Interest.} The author serves as founder of Sulci.ai, a clinical-AI startup that is exploring deployment of EpiKG-derived technology. No commercial relationship funded this manuscript or the underlying experiments. The University of Central Florida is the institution of academic record for this work.

\paragraph{Author Contributions.} Alex Stinard, MD: Conceptualization, Methodology, Software, Validation, Formal Analysis, Investigation, Data Curation, Writing -- Original Draft, Writing -- Review \& Editing, Visualization, Project Administration.

\paragraph{Data Availability.} ClinicalBench questions, reference answers (v1 and v2), raw model predictions, the deterministic keyword evaluator, and physician adjudication data are publicly available at \url{https://huggingface.co/datasets/alexstinard/epikg-clinicalbench} (DOI: \href{https://doi.org/10.57967/hf/8549}{10.57967/hf/8549})~\cite{stinard2026clinicalbench}.
MIMIC-IV clinical notes require separate credentialed access via PhysioNet (\url{https://physionet.org/content/mimiciv/3.1/}).

\paragraph{Use of AI Tools.} Claude Code (Anthropic, Claude Opus 4.6) was used as a programming assistant during system development, data analysis, and manuscript preparation.
All AI-generated content was reviewed, verified, and edited by the author.
Claude Opus 4.6 is also the primary answering model evaluated in ClinicalBench experiments; this dual role is disclosed throughout the paper.

\bibliography{references}

\onecolumn
\appendix

\let\oldsection\section
\renewcommand{\section}{\FloatBarrier\oldsection}

\section{Notation Summary}
\label{app:notation}

\begin{table}[ht]
\centering
\small
\begin{tabular}{@{}ll@{}}
\toprule
\textbf{Symbol} & \textbf{Meaning} \\
\midrule
$\alpha \in \mathcal{A}$ & Assertion label (7 values, Eq.~\ref{eq:assertion}) \\
$\boldsymbol{\tau}_v$ & Valid time (event date, valid from/to) \\
$\boldsymbol{\tau}_t$ & Transaction time (recorded at, doc date, created at) \\
$\tau_a$ & NLP-asserted temporality (Current / Past / Future) \\
$r \in \mathcal{R}$ & Temporal relation (9 values, Table~\ref{tab:temporal_relations}) \\
$c$ & Confidence score $\in [0,1]$ \\
$\xi$ & Experiencer (Patient / Family) \\
$e(m)$ & Epistemic state tuple $(c, \alpha, \xi, \tau)$ \\
$q, \pi$ & Clinical question, patient \\
$\iota$ & Question intent (Change / Current\_State / Historical / Default) \\
$f_{\textit{np}}(c)$ & Fraction of non-present assertions for concept $c$ \\
\bottomrule
\end{tabular}
\caption{Key notation used throughout the paper.}
\label{tab:notation}
\end{table}

\section{Formal Epistemic Preservation}
\label{app:formal}

The epistemic invariant is formalized for the epistemic invariant maintained by the \epikg pipeline, building on the principle that aboutness---the relationship between information artifacts and the entities they represent---must be preserved across transformations~\cite{ceusters2015ontologyepistemology}.

\begin{definition}[Epistemic State]
\label{def:epistemic}
For a clinical mention $m$, the \emph{epistemic state} is the tuple $e(m) = (c, \alpha, \xi, \tau)$, where $c$ is the OMOP concept identifier, $\alpha \in \mathcal{A}$ is the 7-value assertion label (Eq.~\ref{eq:assertion}), $\xi \in \{\textnormal{\textsc{Patient}}, \textnormal{\textsc{Family}}\}$ is the experiencer, and $\tau \in \{\textnormal{\textsc{Current}}, \textnormal{\textsc{Past}}, \textnormal{\textsc{Future}}\}$ is the temporality.
A pipeline $P$ \emph{epistemically preserves} mention $m$ if $P(e(m)) = e(m)$; that is, the epistemic state at the output of every pipeline stage is identical to the state at extraction.
\end{definition}

\begin{proposition}[Assertion Entropy Loss]
\label{prop:entropy}
For concept $c$ in patient $\pi$'s record, let $A_c^\pi$ denote the assertion distribution with empirical frequencies $p(\alpha_i) = n_i / \sum_j n_j$ over the $|\mathcal{A}|$ assertion classes.
The assertion entropy is
\begin{equation}
\label{eq:entropy}
H(A_c^\pi) = -\sum_{i} p(\alpha_i) \log p(\alpha_i) \;.
\end{equation}
An assertion-blind pipeline collapses all mentions to $\alpha = \textsc{Present}$, yielding a degenerate distribution with $H = 0$.
The information loss is $\Delta H = H(A_c^\pi) \geq 0$~\cite{shannon1948}, with $\Delta H > 0$ strictly whenever any mention carries a non-present assertion.
\end{proposition}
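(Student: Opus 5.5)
The plan is to verify three elementary facts about Shannon entropy of a finite empirical distribution, each directly from Eq.~\eqref{eq:entropy}, with the convention $0\log 0 = 0$ (justified by $\lim_{x\to 0^+} x\log x = 0$).

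\emph{Non-negativity.} For every class, $p(\alpha_i)\in[0,1]$, so $\log p(\alpha_i)\le 0$ and each summand $-p(\alpha_i)\log p(\alpha_i)$ is non-negative. Summing gives $H(A_c^\pi)\ge 0$.

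\emph{Degenerate output.} An assertion-blind pipeline sends every mention of $c$ to $\textsc{Present}$. The output frequencies are therefore $p(\textsc{Present})=1$ and $p(\alpha_i)=0$ for all other classes. Each term vanishes, since $1\cdot\log 1=0$ and $0\log 0=0$, so the post-pipeline entropy is $H=0$. I will read the "information loss" as the difference between input and output assertion entropy, $\Delta H = H(A_c^\pi) - 0 = H(A_c^\pi)$. Non-negativity then gives $\Delta H\ge 0$ immediately.

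\emph{Strictness.} This is the one step needing care, because the statement as phrased is not quite right. $H(A_c^\pi)=0$ holds if and only if every summand is zero. For $p\in[0,1]$, the term $-p\log p$ is zero exactly when $p\in\{0,1\}$. So $H=0$ if and only if the distribution is a point mass on a single class.

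It follows that $\Delta H>0$ exactly when the mentions of $c$ span at least two distinct assertion classes. The condition "some mention is non-present" does not suffice: if all mentions of $c$ are, say, \textsc{Absent}, then $H(A_c^\pi)=0$ even though information is still lost.

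I would handle this in two ways:
\begin{itemize}
  \item state the strict inequality under the corrected hypothesis that at least one mention is \textsc{Present} and at least one is non-present, or more generally that two classes have positive frequency;
  \item remark that the all-non-present case still incurs loss, but in a form entropy does not measure. The label itself is flipped, so mutual information or label accuracy is the right measure there. This ties to the faithfulness bound $1-f_{np}(c)$: the fraction of mentions whose label changes is $f_{np}(c)$, and it is positive whenever any mention is non-present.
\end{itemize}

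The main obstacle is therefore not computational but definitional: pinning down what "information loss" means so that the strict claim is actually true.
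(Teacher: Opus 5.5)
Your non-negativity and degenerate-output steps are exactly the paper's proof. The paper computes $H(\phi(A_c^\pi))=0$ for the collapsing map $\phi:\alpha_i\mapsto\textsc{Present}$ and then invokes non-negativity of Shannon entropy to get $\Delta H = H(A_c^\pi)\ge 0$.

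The divergence is the strictness clause, and there you are right and the paper is not. Its proof ends with ``equality iff all mentions already carry $\alpha=\textsc{Present}$,'' but the ``only if'' direction fails. As you show, $H(A_c^\pi)=0$ exactly when the empirical distribution is a point mass on \emph{some} class. So a concept whose mentions are all \textsc{Absent} (a repeatedly denied symptom, say) gives $\Delta H=0$ even though every mention is non-present. This contradicts both the proposition's ``$\Delta H>0$ whenever any mention carries a non-present assertion'' and the proof's equality characterization.

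Your corrected hypothesis, that at least two classes have positive frequency, is the right necessary-and-sufficient condition. ``One \textsc{Present} and one non-present mention'' is only a sufficient special case of it. Your remark about the single-class non-present case is also correct: it is still a loss, visible in the fraction $f_{\textit{np}}(c)$ of labels flipped by the collapse (equal to $1$ there), but invisible to entropy. That is the right way to reconcile the intended claim with what $\Delta H$ actually measures.

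The routes are otherwise identical. What your version adds over the paper's is a statement that is actually true.
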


\begin{proof}
Under the assertion-blind mapping $\phi: \alpha_i \mapsto \textsc{Present}$ for all $i$, the output distribution assigns probability 1 to \textsc{Present} and 0 to all other classes, so $H(\phi(A_c^\pi)) = 0$.
By non-negativity of Shannon entropy, $\Delta H = H(A_c^\pi) - 0 = H(A_c^\pi) \geq 0$, with equality iff all mentions already carry $\alpha = \textsc{Present}$.
\end{proof}

\begin{corollary}[Faithfulness Bound]
\label{cor:faithfulness}
Let $f_{\textit{np}}(c)$ denote the fraction of mentions of concept $c$ carrying a non-present assertion ($\alpha \neq \textsc{Present}$).
Without assertion labels, the maximum assertion-faithful accuracy for any downstream task conditioned on $c$ is bounded by $1 - f_{\textit{np}}(c)$.
In clinical records where negated and uncertain mentions are prevalent---\eg ``no pneumonia,'' ``possible CHF''---this bound can be substantially below 1.
\end{corollary}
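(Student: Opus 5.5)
The plan is to make the informal statement precise and then run a short counting argument. For a fixed concept $c$, consider a downstream task whose assertion-faithful answer for each mention $m$ of $c$ depends on its assertion label $\alpha(m)$. Concretely, the question ``does the patient have $c$?'' has correct answer ``yes'' exactly when $\alpha(m)=\textsc{Present}$, and some other answer (no, uncertain, family member, historical) otherwise. An assertion-blind pipeline is the map $\phi$ from the proof of Proposition~\ref{prop:entropy}, which sends every label to \textsc{Present}. The first step is to observe that, after $\phi$, any downstream procedure $g$ receives the same epistemic input for every mention of $c$. Here I restrict to the assertion channel and hold $c$ fixed. The only other component of $e(m)$ that could differ is $\xi$ or $\tau$, so I will assume these are also collapsed or uninformative about $\alpha$. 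This is exactly the content of ``without assertion labels.''

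The second step is the counting argument. Since $g\circ\phi$ is constant on the mentions of $c$, it outputs a single answer $a$ for all of them. That answer is assertion-faithful on at most the mentions whose true label induces $a$. The natural choice is $a=\text{``present''}$, which matches the collapsed label and is correct on a fraction $1-f_{np}(c)$ of the mentions. This gives the stated bound for the default behaviour of an assertion-blind pipeline. To be fully general, a constant answer over $\mathcal{A}$ achieves at most $\max_i p(\alpha_i)$. I would therefore state the bound as $\max_i p(\alpha_i)$ and note that it equals $1-f_{np}(c)$ whenever \textsc{Present} is the modal class, or whenever the downstream reader is forced to interpret the collapsed label literally as \textsc{Present}. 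That forced-literal reading is the regime the corollary intends.

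The third step links this to Proposition~\ref{prop:entropy}. The bound is strictly less than $1$ exactly when $f_{np}(c)>0$, which holds exactly when $\Delta H>0$. So the entropy loss is the information-theoretic reason accuracy cannot be recovered. For a sharper, randomized-$g$ statement one could invoke Fano's inequality, but the deterministic counting already suffices.

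The main obstacle is the modelling step, not the arithmetic. I need to pin down what ``maximum assertion-faithful accuracy'' ranges over. If the downstream LLM can re-read raw text, or infer $\alpha$ from other retained features, then the collapse is not information-destroying, and no bound of the form $1-f_{np}(c)$ holds. I would therefore state explicitly that the task's only access to the assertion state is through the pipeline output. I would also clarify whether the bound is $1-f_{np}(c)$ or the maximum-class frequency, choosing the former under the literal-\textsc{Present} interpretation.
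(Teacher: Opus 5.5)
Your core argument is the paper's: once labels are collapsed, any predictor assigns one class to every mention of $c$, and choosing \textsc{Present} is correct on exactly a $1-f_{\textit{np}}(c)$ fraction. Your $\max_i p(\alpha_i)$ caveat is a sharpening of that argument rather than a different route. The paper's proof only exhibits one predictor that attains $1-f_{\textit{np}}(c)$. Turning this into an upper bound on the \emph{maximum} requires \textsc{Present} to be the modal class for the specific concept $c$. The paper supplies that only through the parenthetical ``the majority class in clinical text,'' and you make the assumption explicit.

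One step needs fixing. In your third paragraph, $f_{\textit{np}}(c)>0$ does not imply $\Delta H>0$. Suppose every mention of $c$ carries the same non-present label, e.g.\ pneumonia appears only as ``no pneumonia.'' Then $H(A_c^\pi)=0$, yet $1-f_{\textit{np}}(c)=0$. You inherited this from the Proposition's wording, which has the same flaw. The correct equivalence is $\max_i p(\alpha_i)<1 \iff H(A_c^\pi)>0$, and it pairs with your general bound. Under the literal-\textsc{Present} reading, the accuracy loss is a relabeling error, not an entropy loss. You should restate that step in terms of $\max_i p(\alpha_i)$ or drop it; the corollary does not need it.
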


\begin{proof}
Without assertion labels, any predictor must assign a single assertion class to all mentions of $c$.
Choosing $\textsc{Present}$ (the majority class in clinical text) yields accuracy $1 - f_{\textit{np}}(c)$; the $f_{\textit{np}}(c)$ non-present mentions are necessarily misclassified.
\end{proof}

\section{Extended System Design}
\label{app:system_extended}

\subsection{Temporal Knowledge Graph Model (Bi-Temporal Storage + NLP-Asserted Label)}
\label{app:temporal_model}

Clinical events unfold across multiple time dimensions that prior systems model incompletely~\cite{huang2024medtkg,rasmussen2025zep}.
Bi-temporal databases~\cite{snodgrass2000temporal} distinguish valid time from transaction time; \epikg's per-edge representation adds an NLP-asserted temporality label $\tau_a$ and stores all three on every KG edge (data-modeling clarification below):

\begin{definition}[Temporal Edge --- bi-temporal storage with NLP-asserted label]
\label{def:edge}
An edge $e = (s, p, o, \alpha, \boldsymbol{\tau}_v, \boldsymbol{\tau}_t, \tau_a, r, c)$ where:
\begin{itemize}[nosep,leftmargin=1.5em]
  \item $s, o$ are source and target nodes; $p$ is the predicate (one of 24 edge types);
  \item $\alpha \in \mathcal{A}$ is the assertion label (Eq.~\ref{eq:assertion});
  \item $\boldsymbol{\tau}_v = (\textit{event\_date}, \textit{valid\_from}, \textit{valid\_to})$ is the \textbf{valid time} interval---when the relationship held in the real world;
  \item $\boldsymbol{\tau}_t = (\textit{recorded\_at}, \textit{doc\_date}, \textit{created\_at})$ is the \textbf{transaction time}---when it was recorded;
  \item $\tau_a \in \{\textnormal{\textsc{Current}}, \textnormal{\textsc{Past}}, \textnormal{\textsc{Future}}\}$ is the \textbf{NLP-asserted temporality};
  \item $r \in \mathcal{R}$ is an Allen interval algebra relation~\cite{allen1983temporal}; $c \in [0,1]$ is temporal confidence.
\end{itemize}
\end{definition}

$\mathcal{R}$ comprises seven of Allen's 13 canonical relations~\cite{allen1983temporal} (merging symmetric pairs like Before/Meets) plus \textsc{Concurrent} and \textsc{Unknown} (nine total; full mapping in Table~\ref{tab:temporal_relations}).
Unlike TEO~\cite{li2020teo} and TCL~\cite{ong2023tcl}, which use Allen's relations as annotation-ontology classes or modal operators, \epikg stores $r$ and $c$ directly on materialized edges; this is intended to enable temporal-interval filtering during traversal, although the current intent-aware retrieval algorithm queries the categorical $\tau_a$ label rather than Allen-relation predicates (see clarification below).

\paragraph{Data-modeling clarification (bi-temporal storage + derived label).}
What we describe as ``tri-temporal'' is more precisely a \emph{bi-temporal storage layer} (valid time $\boldsymbol{\tau}_v$ + transaction time $\boldsymbol{\tau}_t$, in the Snodgrass tradition~\cite{snodgrass2000temporal}; cf.\ Graphiti~\cite{rasmussen2025zep}) plus an \emph{NLP-asserted temporality label} $\tau_a \in \{\textsc{Past}, \textsc{Current}, \textsc{Future}\}$.
The label is derived from clinical NLP rather than from database time, so on data-modeling grounds the model is bi-temporal-with-derived-attribute, not strictly tri-temporal.
Allen-style interval relations $r \in \mathcal{R}$ are stored on edges as metadata, but the current intent-aware retrieval algorithm queries the categorical $\tau_a$ label rather than Allen-relation predicates; using stored intervals in retrieval (e.g., to enforce \textsc{Before}/\textsc{Overlaps} constraints) is flagged as future work.

\subsection{Graph-Augmented Retrieval Details}
\label{app:retrieval_details}

Given a clinical question $q$ and patient $\pi$, the base retrieval pipeline:
(1)~extracts concepts from $q$ via NLP + OMOP enrichment;
(2)~traverses 2--3 hops via bidirectional breadth-first search (BFS) over patient KG edges and OMOP vocabulary relationships (20M+ edges) via PostgreSQL common table expressions (CTEs) (Appendix~\ref{app:drknows}), pruning edges with $c < 0.3$;
(3)~groups edges into four temporal views (event timeline, current state, historical, conflicts);
(4)~retrieves matching guideline sections;
(5)~scores edges by:
\begin{equation}
\label{eq:scoring}
\text{score}(e) = c(e) + \mathbb{1}[\text{type}(e) \in Q_{\text{rel}}] \cdot 0.2 + \mathbb{1}[\tau_a(e) = \textsc{Current}] \cdot 0.1
\end{equation}
where the bonuses for question-relevant edge types ($0.2$) and current temporality ($0.1$) were set by manual tuning on a development set of 20 questions.
The surviving subgraph is serialized into structured text preserving assertion labels (\eg ``\textsc{Absent}: pneumonia'');
and (6)~composes graph evidence, temporal context, guidelines, and source documents into a single prompt.

\section{Gap Analysis}
\label{app:gap}

\begin{table}[ht]
\centering
\caption{Capability comparison across representative systems. \smark{} = supported, $\circ$ = partial, \xmark{} = not supported.}
\label{tab:gap}
\small
\setlength{\tabcolsep}{3.0pt}
\begin{tabular}{@{}lcccccccc@{}}
\toprule
\textbf{System} & \rotatebox{70}{\textbf{Patient KG}} & \rotatebox{70}{\textbf{OMOP mapping}} & \rotatebox{70}{\textbf{Assertion (7-class)}} & \rotatebox{70}{\textbf{Bi-temp.+label}} & \rotatebox{70}{\textbf{Assert.-aware RAG}} & \rotatebox{70}{\textbf{Experiencer}} & \rotatebox{70}{\textbf{Allen's algebra}} & \rotatebox{70}{\textbf{Multi-hop ($\geq$3)}} \\
\midrule
DoctorRAG~\cite{lu2025doctorrag}         & \xmark & \xmark & \xmark & \xmark & \xmark & \xmark & \xmark & \xmark \\
GFM-RAG~\cite{luo2025gfmrag}             & $\circ$ & \xmark & \xmark & \xmark & \xmark & \xmark & \xmark & \smark{} \\
MedRAG~\cite{wang2025medrag}              & $\circ$ & \xmark & \xmark & \xmark & \xmark & \xmark & \xmark & $\circ$ \\
KARE~\cite{jiang2025kare}                 & $\circ$ & \xmark & \xmark & \xmark & $\circ$ & \xmark & \xmark & \smark{} \\
Multi-LLM KG~\cite{chen2026multilllmkgrag} & \xmark & \smark{} & $\circ$ & \xmark & \xmark & \xmark & \xmark & \xmark \\
MedTKG~\cite{huang2024medtkg}             & \smark{} & \xmark & \xmark & $\circ$ & \xmark & \xmark & \xmark & \smark{} \\
Graphiti~\cite{rasmussen2025zep}       & \xmark & \xmark & \xmark & $\circ$ & \xmark & \xmark & \xmark & \smark{} \\
MediGRAF~\cite{thio2025medigraf}          & \smark{} & \xmark & \xmark & \xmark & \xmark & \xmark & \xmark & \smark{} \\
\midrule
\textbf{\epikg (this work)}                    & \smark{} & \smark{} & \smark{} & \smark{} & \smark{} & \smark{} & \smark{} & $\circ$ \\
\bottomrule
\end{tabular}
\end{table}

Partial marks indicate limited coverage: GFM-RAG, MedRAG, and KARE build population-level or hierarchical KGs (not per-patient cross-admission graphs); KARE uses KG-augmented retrieval but without assertion awareness; Multi-LLM KG models uncertainty via entropy but not the full assertion taxonomy; MedTKG and Graphiti carry bi-temporal annotations but lack the assertion-aware retrieval of \epikg.
MediGRAF is the closest patient-level competitor, constructing per-patient medical graphs from clinical notes, but it does not preserve assertion status or temporal relations as edge attributes.
\epikg's partial mark on multi-hop traversal ($\geq 3$ hops) reflects a deliberate architectural trade-off: PostgreSQL CTE-based traversal provides ACID compliance but degrades beyond 2~hops (Appendix~\ref{app:drknows}).

\subsection{Extended Related Work Discussion}
\label{app:related_extended}

This subsection expands the four-axis related work overview in Section~\ref{sec:related}.

\paragraph{Medical RAG systems.}
Graph-augmented retrieval has emerged as a leading paradigm for medical QA.
GraphRAG~\cite{edge2024graphrag} introduced community-based summarization; GFM-RAG~\cite{luo2025gfmrag} trained a graph foundation model across 60 KGs; KARE~\cite{jiang2025kare} adapted community retrieval to clinical decision support; Medical-Graph-RAG~\cite{wu2025medicalgraphrag} links documents via triple graphs.
However, no prior clinical KG-RAG system jointly propagates note-derived assertion classes together with bi-temporal storage plus an NLP-asserted temporality label as first-class graph properties through extraction, storage, and retrieval.
GFM-RAG and KARE build population-level graphs rather than patient-level graphs from clinical text, and a recent survey~\cite{graphragsurvey2025} does not identify systems that carry epistemic metadata through the full retrieval stack.

\paragraph{Clinical QA benchmarks and systems.}
MedPaLM~2~\cite{singhal2023medpalm2}, Med-Gemini~\cite{saab2024medgemini}, and AMIE~\cite{tu2024amie} target \emph{medical knowledge recall} from parametric knowledge or literature.
ClinicalBench targets a narrower task: \emph{assertion-faithful cross-admission reasoning} over real EHR records, where the challenge is knowing whether \emph{this patient} is still on metformin, whether a condition was ruled out or confirmed, and how the clinical picture changed across admissions.
Prior EHR QA benchmarks---emrQA~\cite{pampari2018emrqa}, EHR-RAG~\cite{cao2026ehrrag}, MIRAGE~\cite{xiong2024mirage}---evaluate medically grounded retrieval but not assertion-sensitive, cross-admission QA with category$\times$condition ablations and physician adjudication.

\paragraph{Clinical KG construction.}
Multi-LLM KG-RAG~\cite{chen2026multilllmkgrag} uses multi-agent prompting with schema-constrained extraction for oncology; AutoRD~\cite{li2024autord} and RECAP-KG~\cite{remy2024recapkg} apply LLMs to rare disease and GP notes respectively.
All share a common limitation: assertion status is not propagated into the final graph.

\paragraph{Assertion detection.}
NegEx~\cite{chapman2001negex} introduced trigger-based negation; ConText~\cite{harkema2009context} extended it with temporality and experiencer; Gul et al.~\cite{gul2025beyondnegation} fine-tuned LLMs to 0.962 accuracy on the i2b2/VA taxonomy~\cite{uzuner2011i2b2assertion}.
All treat assertion detection as a terminal annotation task: labels are not carried into knowledge graphs or retrieval systems, and are not temporally situated across encounters.

\paragraph{Temporal knowledge graphs.}
MedTKG~\cite{huang2024medtkg} constructs temporal KGs with time-stamped snapshots (event time only); Graphiti~\cite{rasmussen2025zep} implements bitemporal edges but lacks clinical ontology alignment.

\section{Safety Score}
\label{app:safety}

The safety score $S = 1 - \frac{1}{N} \sum_{i=1}^{N} w_i \cdot \mathbb{1}[\hat{y}_i \neq y_i]$ applies $w_i = 2.0$ for false-positive assertion errors (reporting a negated or absent condition as present) and $w_i = 1.0$ otherwise, capturing asymmetric clinical risk where acting on a falsely affirmed condition is more dangerous than missing a present one.
The value $w=2.0$ is treated as a reasonable default; sensitivity to this choice is a direction for future work.

\section{Cross-Model Per-Category Results}
\label{app:crossmodel_categories}

\begin{table}[ht]
\centering
\caption{ClinicalBench per-category accuracy (\%) by model under C1 and C4g (intent-aware KG-RAG, evaluator v2) for Opus, MedGemma 27B, GPT-OSS, Gemma~4 31B, MedGemma 1.5 4B, and Qwen3.5. These per-category improvements are descriptive; several categories are small, and the change category has known label/evaluator defects discussed in Section~\ref{sec:results:physician}.}
\label{tab:crossmodel_categories}
\scriptsize
\setlength{\tabcolsep}{1.5pt}
\begin{tabular}{@{}lcccccccccccccccccccc@{}}
\toprule
& & \multicolumn{3}{c}{Claude Opus 4.6} & \multicolumn{3}{c}{MedGemma 27B} & \multicolumn{3}{c}{GPT-OSS 20B} & \multicolumn{3}{c}{Gemma~4 31B} & \multicolumn{3}{c}{MedGemma 1.5 4B} & \multicolumn{3}{c}{Qwen3.5 35B} \\
\cmidrule(lr){3-5} \cmidrule(lr){6-8} \cmidrule(lr){9-11} \cmidrule(lr){12-14} \cmidrule(lr){15-17} \cmidrule(lr){18-20}
Category & $n$ & C1 & C4g & $\Delta$ & C1 & C4g & $\Delta$ & C1 & C4g & $\Delta$ & C1 & C4g & $\Delta$ & C1 & C4g & $\Delta$ & C1 & C4g & $\Delta$ \\
\midrule
Negation & 110 & 44.5 & 81.8 & \up{37.3} & 57.3 & 77.3 & \up{20.0} & 45.5 & 80.9 & \up{35.5} & 74.5 & 82.7 & \up{8.2} & 59.1 & 64.5 & \up{5.4} & 69.1 & 85.5 & \up{16.4} \\
Conditional & 20 & 0.0 & 45.0 & \up{45.0} & 15.0 & 35.0 & \up{20.0} & 0.0 & 15.0 & \up{15.0} & 0.0 & 45.0 & \up{45.0} & 5.0 & 25.0 & \up{20.0} & 5.0 & 20.0 & \up{15.0} \\
Uncertainty & 40 & 12.5 & 50.0 & \up{37.5} & 7.5 & 32.5 & \up{25.0} & 5.0 & 30.0 & \up{25.0} & 17.5 & 30.0 & \up{12.5} & 17.5 & 45.0 & \up{27.5} & 37.5 & 32.5 & \dn{5.0} \\
Family hist. & 30 & 3.3 & 56.7 & \up{53.3} & 3.3 & 40.0 & \up{36.7} & 3.3 & 40.0 & \up{36.7} & 23.3 & 83.3 & \up{60.0} & 3.3 & 33.3 & \up{30.0} & 26.7 & 40.0 & \up{13.3} \\
Sequence & 40 & 7.5 & 62.5 & \up{55.0} & 2.5 & 37.5 & \up{35.0} & 7.5 & 32.5 & \up{25.0} & 17.5 & 40.0 & \up{22.5} & 40.0 & 75.0 & \up{35.0} & 25.0 & 57.5 & \up{32.5} \\
Current st. & 50 & 26.0 & 70.0 & \up{44.0} & 32.0 & 58.0 & \up{26.0} & 36.0 & 56.0 & \up{20.0} & 12.0 & 66.0 & \up{54.0} & 36.0 & 58.0 & \up{22.0} & 36.0 & 56.0 & \up{20.0} \\
Duration & 30 & 33.3 & 60.0 & \up{26.7} & 46.7 & 63.3 & \up{16.7} & 16.7 & 66.7 & \up{50.0} & 60.0 & 63.3 & \up{3.3} & 53.3 & 86.7 & \up{33.4} & 40.0 & 86.7 & \up{46.7} \\
Historical & 50 & 6.0 & 60.0 & \up{54.0} & 0.0 & 46.0 & \up{46.0} & 0.0 & 60.0 & \up{60.0} & 12.0 & 36.0 & \up{24.0} & 12.0 & 36.0 & \up{24.0} & 8.0 & 46.0 & \up{38.0} \\
Change & 30 & 10.0 & 96.7 & \up{86.7} & 0.0 & 73.3 & \up{73.3} & 0.0 & 66.7 & \up{66.7} & 3.3 & 36.7 & \up{33.4} & 6.7 & 26.7 & \up{20.0} & 3.3 & 16.7 & \up{13.3} \\
\midrule
\textbf{Overall} & \textbf{400} & \textbf{21.8} & \textbf{68.5} & \up{\textbf{46.8}} & \textbf{25.5} & \textbf{56.5} & \up{\textbf{31.0}} & \textbf{20.2} & \textbf{60.2} & \up{\textbf{40.0}} & \textbf{33.5} & \textbf{58.5} & \up{\textbf{25.0}} & \textbf{33.0} & \textbf{53.8} & \up{\textbf{20.8}} & \textbf{36.2} & \textbf{57.0} & \up{\textbf{20.7}} \\
\bottomrule
\end{tabular}
\end{table}

\section{MedGemma Full Per-Category Results}
\label{app:medgemma_percategory}

\begin{table}[ht]
\centering
\small
\begin{tabular}{@{}lcccc@{}}
\toprule
\textbf{Category} & $n$ & \textbf{C7} & \textbf{C1} & \textbf{C4g} \\
\midrule
Negation & 110 & --- & 57.3 & 77.3 \\
Conditional & 20 & --- & 15.0 & 35.0 \\
Uncertainty & 40 & --- & 7.5 & 32.5 \\
Family History & 30 & --- & 3.3 & 40.0 \\
Sequence & 40 & --- & 2.5 & 37.5 \\
Current State & 50 & --- & 32.0 & 58.0 \\
Duration & 30 & --- & 46.7 & 63.3 \\
Historical & 50 & --- & 0.0 & 46.0 \\
Change & 30 & --- & 0.0 & 73.3 \\
\midrule
\textbf{Overall} & \textbf{400} & --- & \textbf{25.2} & \textbf{56.2} \\
\bottomrule
\end{tabular}
\caption{MedGemma 27B ClinicalBench per-category accuracy (\%, keyword evaluator v2) for three conditions. Per-category changes are descriptive; several categories are small, and the change category has known label/evaluator defects discussed in Section~\ref{sec:results:physician}.}
\label{tab:medgemma_percategory}
\end{table}

\subsection{MedGemma 1.5 4B Full Per-Category Results}
\label{app:medgemma15_percategory}

\begin{table}[ht]
\centering
\small
\begin{tabular}{@{}lcccc@{}}
\toprule
\textbf{Category} & $n$ & \textbf{C1} & \textbf{C4g KW} & \textbf{C4g Oracle} \\
\midrule
Negation & 110 & 59.1 & 64.5 & 64.5 \\
Conditional & 20 & 5.0 & 25.0 & 25.0 \\
Uncertainty & 40 & 17.5 & 45.0 & 45.0 \\
Family History & 30 & 3.3 & 23.3 & 33.3 \\
Sequence & 40 & 40.0 & 75.0 & 75.0 \\
Current State & 50 & 36.0 & 40.0 & 58.0 \\
Duration & 30 & 53.3 & 86.7 & 86.7 \\
Historical & 50 & 12.0 & 36.0 & 36.0 \\
Change & 30 & 6.7 & 26.7 & 26.7 \\
\midrule
\textbf{Overall} & \textbf{400} & \textbf{33.0} & \textbf{50.7} & \textbf{53.8} \\
\bottomrule
\end{tabular}
\caption{MedGemma 1.5 4B ClinicalBench per-category accuracy (\%, keyword evaluator v2) for three conditions: C1 (LLM alone), C4g keyword-only classification, and C4g oracle classification. Keyword$\to$oracle delta: $+3.0\pp$ overall, concentrated in current state ($+18\pp$) and family history ($+10\pp$); all other categories are identical between oracle and keyword. Per-category changes are descriptive; several categories are small, and the change category has known label/evaluator defects discussed in Section~\ref{sec:results:physician}.}
\label{tab:medgemma15_percategory}
\end{table}

\section{Experiencer Attribute Propagation Ablation}
\label{app:experiencer}

The \texttt{experiencer} attribute distinguishes patient conditions from family member conditions; without it, the graph conflates the two, causing family history misattribution.

\begin{table}[ht]
\centering
\caption{Impact of experiencer attribute propagation on Opus C4 (400 questions, prior evaluator run). Categories with no change omitted.}
\label{tab:experiencer_fix}
\small
\begin{tabular}{@{}lccc@{}}
\toprule
Category & Pre-fix & Post-fix & $\Delta$ \\
\midrule
Family history & 63.3\% & \textbf{73.3\%} & $+10.0\pp$ \\
Uncertainty & 50.0\% & \textbf{55.0\%} & $+5.0\pp$ \\
Historical & 34.0\% & 38.0\% & $+4.0\pp$ \\
Current state & 46.0\% & 48.0\% & $+2.0\pp$ \\
\midrule
\textbf{Overall C4} & \textbf{68.2\%} & \textbf{70.2\%} & $+2.0\pp$ \\
\bottomrule
\end{tabular}
\end{table}

The fix improved family history by $+10.0\pp$ with zero regressions on guard categories (negation, conditional, duration, sequence all unchanged), confirming that the experiencer attribute is load-bearing for assertion-sensitive reasoning.

\section{SliceBench}
\label{app:slicebench}
\label{app:slicebench_figures}

SliceBench (6 patients, 144 questions, 3 complexity tiers; LLM-as-judge evaluation) is consistent with a complexity-dependent KG effect: the incremental KG layer (B2$\rightarrow$B3) contributes $+2.2\pp$ overall (CI: \ci{-1.5\pp}{+5.9\pp}), not reaching aggregate significance, while Tier~C (15+ encounters) gains $+5.0\pp$ vs.\ Tier~A (1--2 notes) $+0.6\pp$ (Table~\ref{tab:slicebench_tiers}).
Because the overall B2$\rightarrow$B3 comparison is not statistically distinguishable from zero and tier-level results are descriptive only ($n=2$ patients per tier), this pattern is treated as exploratory.

\begin{figure}[!htbp]
\centering
\includegraphics[width=0.75\textwidth]{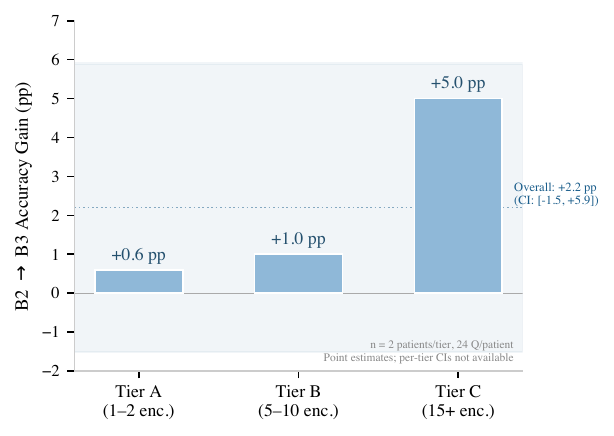}
\vspace{4pt}
\includegraphics[width=\textwidth]{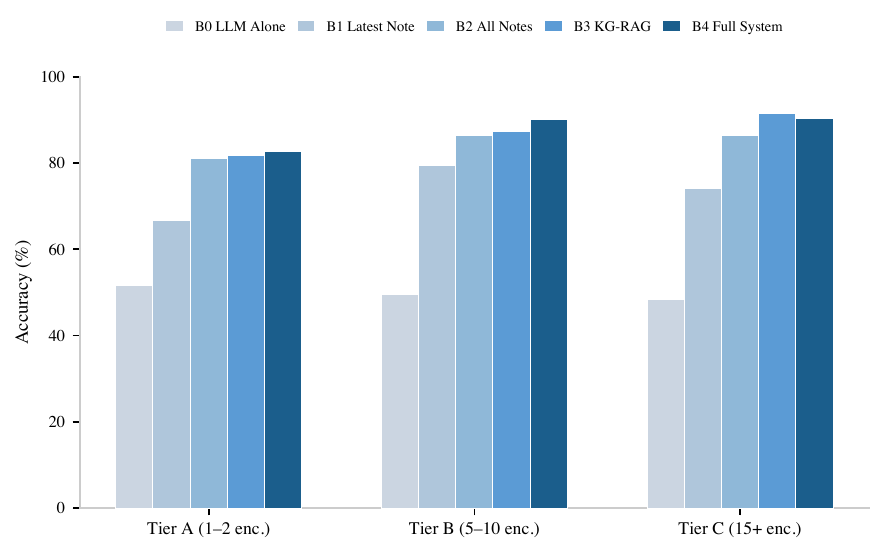}
\caption{SliceBench exploratory results ($n=2$ patients per tier, 24 questions each). Tier-specific B2$\rightarrow$B3 deltas are point estimates only and should not be over-interpreted; the overall B2$\rightarrow$B3 delta is $+2.2\pp$ (95\% CI \ci{-1.5\pp}{+5.9\pp}).}
\label{fig:slicebench_figures}
\end{figure}

\section{Per-Category Delta Chart and Transition Analysis}
\label{app:deltas}

\begin{figure}[!htbp]
\centering
\includegraphics[width=\textwidth]{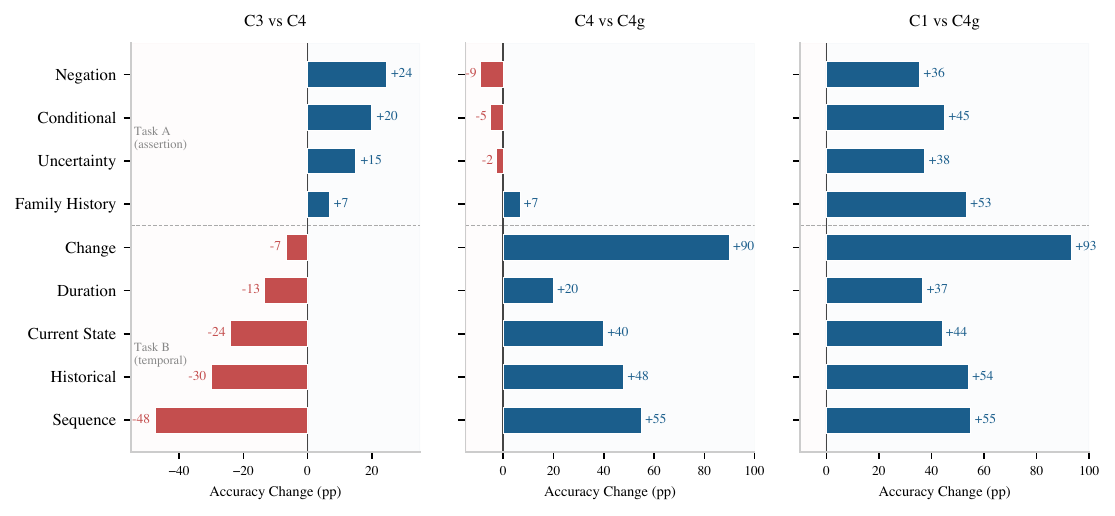}
\caption{Per-category deltas for three paired contrasts (C3 vs.\ C4, C4 vs.\ C4g, and C1 vs.\ C4g). These contrasts are descriptive, not a formal additive decomposition, and categories with small $n$ or known evaluator/reference-answer defects are shown for transparency only.}
\label{fig:deltas}
\end{figure}

\label{app:c4_transitions}

Table~\ref{tab:c4_transitions} reports question-level transitions between C3 (KG-RAG without assertions) and C4 (with assertions, no routing), with C4g recovery rates.

\begin{table}[ht]
\centering
\small
\caption{Per-category C3$\to$C4 transition analysis (Claude Opus 4.6, $n\!=\!400$). \emph{Regr.}: C3 correct $\to$ C4 incorrect; \emph{Impr.}: C3 incorrect $\to$ C4 correct; \emph{Recov.}: regressions recovered by C4g.}
\label{tab:c4_transitions}
\begin{tabular}{@{}lrrrrrr@{}}
\toprule
Category & $n$ & C3 & C4 & Regr. & Impr. & Recov. \\
\midrule
\multicolumn{7}{@{}l}{\emph{Temporal categories (net $-52$):}} \\
Current state & 50 & 54\% & 30\% & 20 & 8 & 17 \\
Sequence & 40 & 55\% & 10\% & 19 & 1 & 19 \\
Historical & 50 & 42\% & 12\% & 18 & 3 & 15 \\
Duration & 30 & 57\% & 47\% & 10 & 7 & 8 \\
Change & 30 & 20\% & 7\% & 6 & 2 & 5 \\
\midrule
\multicolumn{7}{@{}l}{\emph{Assertion-sensitive categories (net $+37$):}} \\
Negation & 110 & 66\% & 89\% & 6 & 31 & 5 \\
Family history & 30 & 43\% & 50\% & 8 & 10 & 7 \\
Uncertainty & 40 & 38\% & 52\% & 7 & 13 & 6 \\
Conditional & 20 & 30\% & 50\% & 1 & 5 & 1 \\
\midrule
\textbf{Total} & 400 & 50\% & 46\% & 95 & 80 & 83 \\
\bottomrule
\end{tabular}
\end{table}

\noindent Overall, 87.4\% of regressions (83/95) are recovered by C4g. Temporal categories account for 73/95 regressions but only 21/80 improvements; assertion-sensitive categories show the reverse pattern (22/95 regressions, 59/80 improvements). This confirms that assertions help on epistemic questions but suppress needed evidence for temporal synthesis without intent-matched routing.

\paragraph{Qualitative examples.}
C4 regressions exhibit a consistent pattern: the evidence context includes raw assertion summaries that distract the LLM from the question's intent.
For family-history questions, C4 states ``The patient has Lymphoma'' (leaking family-history findings to patient state); for sequence questions, C4 lists assertion types instead of temporal ordering; for historical questions, C4 notes ``history of depression'' and concludes it is no longer active.
In each case, C4g's intent-aware retrieval eliminates the distraction by routing to category-specific evidence (cross-admission comparison, timeline traversal, or current-state filtering).

\section{SliceBench Conditions}
\label{app:slicebench_conditions}

SliceBench selects patients in three tiers: \textbf{Tier~A} (2 patients, 1--2 encounters), \textbf{Tier~B} (2 patients, 5--10), and \textbf{Tier~C} (2 patients, 15+).
Each patient receives 24 questions spanning hard cross-admission categories (cross-encounter medication timelines, problem list reconciliation, causal chain tracing), yielding 144 total.

\begin{table}[ht]
\centering
\small
\caption{SliceBench conditions. B0--B4 form a monotone context progression.}
\label{tab:slicebench_conditions}
\begin{tabular}{@{}clp{7.5cm}@{}}
\toprule
ID & Condition & Context Provided to LLM \\
\midrule
B0 & LLM Alone & Question only (parametric knowledge) \\
B1 & Latest Note & Most recent clinical note \\
B2 & All Notes RAG & All notes, retrieved by relevance \\
B3 & KG-RAG & Knowledge graph context + all notes \\
B4 & Full System & KG-RAG + guidelines + calculators \\
\bottomrule
\end{tabular}
\end{table}

\section{System Implementation Details}
\label{app:supporting}

\paragraph{LLM baseline.}
On MedQA-USMLE (965 questions), Claude Opus 4.5 achieves 81.6\% accuracy (5.1\,pp below GPT-4 at 86.7\% and Med-PaLM~2 at 86.5\%~\cite{medagentbench2025}), establishing that the Claude model family---Sonnet 4.5 (SliceBench answerer) and Opus 4.6 (ClinicalBench primary, SliceBench judge)---provides a reasonable LLM baseline (Table~\ref{tab:medqa}).

\begin{table}[ht]
\centering
\small
\begin{tabular}{lccc}
\toprule
\textbf{Model} & \textbf{Accuracy} & \textbf{Step 1} & \textbf{N} \\
\midrule
EpiKG (LLM alone) & 81.6\% & 79.9\% & 965 \\
GPT-4 (2023) & 86.7\% & --- & 1{,}273 \\
Med-PaLM 2 (2023) & 86.5\% & --- & 1{,}273 \\
Claude 3 Opus (2024) & 78.0\% & --- & 1{,}273 \\
\bottomrule
\end{tabular}
\caption{MedQA-USMLE results. The LLM-alone baseline is competitive.}
\label{tab:medqa}
\end{table}

\paragraph{KG scale and traversal latency.}
The deployed system processes 145 documents across 85 patients, materializing 3{,}100 KG nodes and 8{,}803 edges.
Graph traversal latency is sub-millisecond at this scale: 0.57\,ms (1-hop), 0.75\,ms (2-hop).
The full system integrates 201 clinical calculators, 1{,}202 guideline sections, 20M+ OMOP vocabulary relationships, 122 assertion trigger patterns, and 24 edge types across 13 node types.

\paragraph{Multi-hop traversal.}
\label{app:drknows}
On the DR.KNOWS diagnostic reasoning benchmark~\cite{drknows2025}, which measures KG traversal accuracy using PostgreSQL-backed clinical data, \epikg achieves 0.420 overall (50\% at 1-hop, 25\% at 2-hop, 0\% at 3-hop).
Multi-hop degradation reflects a deliberate architectural trade-off: PostgreSQL CTE-based traversal provides ACID compliance but scales poorly beyond 2~hops.

\section{SliceBench Tier-Stratified Results}
\label{app:slicebench_tiers}

\begin{table}[ht]
\centering
\small
\caption{SliceBench results stratified by patient complexity tier. B2$\rightarrow$B3 $\Delta$ shows the incremental KG contribution.}
\label{tab:slicebench_tiers}
\begin{tabular}{@{}lccccccc@{}}
\toprule
& & \multicolumn{5}{c}{Score (\%)} & \\
\cmidrule(lr){3-7}
Tier & Encounters & B0 & B1 & B2 & B3 & B4 & B2$\rightarrow$B3 $\Delta$ \\
\midrule
A & 1--2 notes & 51.7 & 66.7 & 81.1 & 81.8 & 82.6 & $+0.6\pp$ \\
B & 5--10 notes & 49.5 & 79.4 & 86.4 & 87.4 & 90.1 & $+1.0\pp$ \\
C & 15+ notes & 48.4 & 74.2 & 86.5 & 91.5 & 90.3 & $+5.0\pp$ \\
\bottomrule
\end{tabular}
\end{table}

\section{ClinicalBench Full Per-Category Results (Opus)}
\label{app:percategory}

\begin{table}[ht]
\centering
\small
\begin{tabular}{lccccccccc}
\toprule
\textbf{Category} & $n$ & \textbf{C7} & \textbf{C6} & \textbf{C1} & \textbf{C2} & \textbf{C2b} & \textbf{C3} & \textbf{C4} & \textbf{C4g} \\
\midrule
Negation & 110 & 99.1$^{\dagger}$ & 85.5 & 44.5 & 71.8 & 76.4 & 66.4 & 89.1 & 81.8 \\
Conditional & 20 & 0.0 & 45.0 & 0.0 & 35.0 & 50.0 & 30.0 & 50.0 & 45.0 \\
Uncertainty & 40 & 0.0 & 35.0 & 12.5 & 35.0 & 32.5 & 37.5 & 52.5 & 50.0 \\
Family History & 30 & 0.0 & 56.7 & 3.3 & 43.3 & 43.3 & 43.3 & 50.0 & 56.7 \\
Sequence & 40 & 0.0 & 82.5 & 7.5 & 60.0 & 45.0 & 55.0 & 10.0 & 62.5 \\
Current State & 50 & 0.0 & 18.0 & 26.0 & 32.0 & 32.0 & 54.0 & 30.0 & 70.0 \\
Duration & 30 & 0.0 & 43.3 & 33.3 & 66.7 & 63.3 & 56.7 & 46.7 & 63.3 \\
Historical & 50 & 0.0 & 62.0 & 6.0 & 48.0 & 46.0 & 42.0 & 12.0 & 60.0 \\
Change & 30 & 0.0 & 56.7 & 10.0 & 36.7 & 23.3 & 20.0 & 6.7 & 96.7 \\
\midrule
\textbf{Overall} & \textbf{400} & \textbf{27.2}$^{\dagger}$ & \textbf{59.2} & \textbf{21.8} & \textbf{52.0} & \textbf{50.8} & \textbf{50.0} & \textbf{46.2} & \textbf{68.5} \\
\bottomrule
\end{tabular}
\caption{Complete ClinicalBench per-category accuracy (\%, Claude Opus 4.6, keyword evaluator v2) for all conditions, computed over the full 400-item set. The change-excluded keyword endpoint (sensitivity comparator; the primary endpoint is the leave-author-out McNemar in Section~\ref{sec:results:primary_endpoint}) excludes the 30 change-category items \emph{and} 8 experiencer-attribution-defective items (Appendix~\ref{app:bugged_items}), yielding $n=362$; per-category counts above are for the full set. C2b (Contriever dense retrieval) scores comparably to C2 (TF-IDF), confirming the retrieval method does not drive the C2$\to$C4g gap. C4 adds assertion metadata to C3 without intent routing; it helps assertion-sensitive categories (negation: $+22.7\pp$) but hurts temporal categories (historical: $-30\pp$, sequence: $-45\pp$). Only with intent routing (C4g) does the full system reach 68.5\%. $^{\dagger}$C7 is reported as ``--- (evaluator artifact, semantic 0\%)'' in Table~\ref{tab:clinicalbench_ablation}; raw 27.2\% reflects template refusals coincidentally matching negation keywords (see Section~\ref{sec:results:clinicalbench}).}
\label{tab:full_percategory}
\end{table}

\subsection{Extension Condition Per-Category Results}
\label{app:extension_categories}

\begin{table}[ht]
\centering
\small
\caption{Per-category accuracy (\%) for extension conditions ($n=240$, frozen evaluator + canonical questions\_v2 gold). C1b: discharge summary only. C4g+full: intent-aware KG-RAG with all clinical notes. The extension subset spans 4 categories: negation ($n=110$), current state ($n=50$), historical ($n=50$), duration ($n=30$).}
\label{tab:extension_categories}
\begin{tabular}{@{}l cc cc cc cc@{}}
\toprule
& \multicolumn{2}{c}{Opus} & \multicolumn{2}{c}{Qwen3.5} & \multicolumn{2}{c}{GPT-OSS} & \multicolumn{2}{c}{MedGemma} \\
\cmidrule(lr){2-3} \cmidrule(lr){4-5} \cmidrule(lr){6-7} \cmidrule(lr){8-9}
Category & C1b & C4g+ & C1b & C4g+ & C1b & C4g+ & C1b & C4g+ \\
\midrule
Negation & 82.7 & 92.7 & 82.7 & 86.4 & 84.5 & 85.5 & 80.9 & 85.5 \\
Current st. & 14.0 & 24.0 & 38.0 & 38.0 & 56.0 & 44.0 & 20.0 & 28.0 \\
Historical & 52.0 & 74.0 & 36.0 & 56.0 & 22.0 & 14.0 & 44.0 & 44.0 \\
Duration & 46.7 & 56.7 & 40.0 & 76.7 & 43.3 & 86.7 & 36.7 & 76.7 \\
\midrule
\textbf{Overall} & \textbf{57.5} & \textbf{70.0} & \textbf{58.3} & \textbf{68.8} & \textbf{60.4} & \textbf{62.1} & \textbf{55.0} & \textbf{63.8} \\
\bottomrule
\end{tabular}
\end{table}

All four models gain on the extension endpoint, with duration showing the largest gain across architectures (Opus $+10\pp$, Qwen3.5 $+36.7\pp$, GPT-OSS $+43.4\pp$, MedGemma $+40\pp$), indicating cross-admission temporal recall benefits substantially from structured graph retrieval. Opus and Qwen3.5 also gain on historical (Opus $+22\pp$, Qwen3.5 $+20\pp$); GPT-OSS shows a slight decline on historical ($-8\pp$) and current state ($-12\pp$), consistent with smaller-model difficulty stably extracting cross-admission state. The reference-answer defects in historical (Section~\ref{sec:results:physician}) attenuate measured gains on this category across all four models.

\section{Physician Adjudication Protocol and Summary}
\label{app:human}
\label{app:physician_protocol}

\paragraph{Design.} A single board-certified emergency physician (A.S.) independently scored ClinicalBench answers under two conditions (C1 and C4g) using a five-dimensional rubric (reference-answer correctness, model-answer correctness, auto-score fairness, clinical safety, clinical utility).
The reviewer was blinded to condition assignment (conditions randomized as A/B), so this should be interpreted as a blinded internal expert audit rather than independent external validation.
The paired adjudication covers 120 unique questions $\times$ 2 conditions.
Item-level reference-answer and safety summaries use the full adjudication record set, which includes one repeated blinded-condition review and therefore sums to 241 records.
Free-text physician notes were recorded for 165/241 items (68.5\%).
Reviews were conducted using a custom scoring interface with access to full de-identified MIMIC-IV discharge summaries.
Condition assignment was randomized and approximately balanced across blinded labels; this verifies allocation balance, not successful deblinding prevention, because C4g outputs can be more structured than C1 outputs.

\paragraph{Endpoints.} (1)~Human--keyword evaluator concordance rate (fraction of automated scores confirmed by physician).
(2)~Physician-rated C4g vs.\ C1 accuracy, safety, and utility.
(3)~Reference-answer error rate and defect taxonomy.

\paragraph{Reference-answer error analysis.}
Of the 241 audited records, only 44\% of v2 reference answers were rated fully correct; 29.5\% were outright wrong and 19.5\% needed revision.
Errors are concentrated in change (0\% correct---NLP conflated inpatient orders with discharge medications), historical (56.7\% wrong---``history of X'' misclassified as resolved), and uncertainty (37.5\% wrong---causal uncertainty conflated with existential uncertainty).
Because the detected defects are question-level rather than condition-specific, they are less likely to reverse the direction of the C1--C4g comparison. They do, however, materially affect absolute accuracies and some category-level magnitudes, especially for change and historical questions.

\begin{table}[ht]
\centering
\small
\caption{Reference-answer quality by category (physician adjudication of v2 reference set). Defect rate = fraction of reference answers rated less than fully correct. Categories with the highest defect rates drive most evaluator disagreements.}
\label{tab:gold_quality}
\begin{tabular}{@{}lrcc@{}}
\toprule
Category & $n$ & Correct & Defect Rate \\
\midrule
Change & 31 & 0.0\% & 100.0\% \\
Historical & 30 & 36.7\% & 63.3\% \\
Uncertainty & 40 & 40.0\% & 60.0\% \\
Current state & 30 & 43.3\% & 56.7\% \\
Negation & 10 & 50.0\% & 50.0\% \\
Sequence & 40 & 50.0\% & 50.0\% \\
Conditional & 20 & 55.0\% & 45.0\% \\
Duration & 20 & 65.0\% & 35.0\% \\
Family history & 20 & 70.0\% & 30.0\% \\
\midrule
\textbf{Overall} & \textbf{241} & \textbf{44.0\%} & \textbf{56.0\%} \\
\bottomrule
\end{tabular}
\end{table}

\paragraph{Auto-evaluator concordance.}
The keyword evaluator agreed with physician judgment in 54.2\% of cases ($n=240$).
When it disagreed, it was overwhelmingly too strict: 97 false negatives (40.4\%) vs.\ 13 false positives (5.4\%), a 7.5:1 strict-to-lenient ratio.
The majority of evaluator errors (63\% of false negatives) trace to reference-answer errors rather than evaluator logic.

\paragraph{Disclosure and limitations.} The reviewing physician (A.S.) is the author and system designer (see Section~\ref{sec:bench:physician} for primary disclosure).
Single-reviewer design limits inter-rater reliability assessment; three-rater external validation results (including two independent external physicians) are reported in Section~\ref{sec:results:external}.

\paragraph{Status.} This protocol was designed before full result synthesis.
Full adjudication results ($n=120$ paired) are reported in Section~\ref{sec:results:physician}; complete per-category results in Appendix~\ref{app:adjudication_full}.

\section{Reproducibility Details}
\label{app:reproducibility}

\paragraph{Models.} ClinicalBench primary ablation: Claude Opus 4.6 (claude-opus-4-6, keyword evaluator v2).
Cross-model: MedGemma 27B (alibayram/medgemma:27b, 4-bit GGUF), GPT-OSS 20B (4-bit GGUF), Qwen3.5 35B, Gemma~4 31B (\texttt{gemma4:31b}, 4-bit GGUF, \texttt{num\_predict=2048}), and MedGemma 1.5 4B (\texttt{alibayram/medgemma15:4b}, 4-bit GGUF, \texttt{num\_predict=2048}, with \texttt{<unused94>}/\texttt{<unused95>} stop tokens) via Ollama; same evaluator.
SliceBench: Claude Sonnet 4.5 (claude-sonnet-4-5-20250929) answers, Opus 4.6 judges.
Temperature 0 throughout; 4-bit quantization introduces GPU non-determinism ($\pm 10\pp$ run-to-run for MedGemma), controlled via within-run paired comparisons.

\paragraph{Evaluation.} The deterministic keyword evaluator performs exact word-boundary matching (regex \verb|\bKEYWORD\b|) against reference-answer assertion and temporal keywords.
Per-category keyword sets: negation (``no,'' ``denies,'' ``absent,'' ``negative''), uncertainty (``possible,'' ``suspected,'' ``may,'' ``likely,'' ``consider''), temporal (``before,'' ``after,'' ``during,'' ``changed,'' ``new''), plus category-specific terms.
Evidence preambles (echoed graph context) are stripped before matching.

\paragraph{Retrieval.} C2's vanilla RAG uses TF-IDF over chunked clinical notes (512-token chunks, 64-token overlap), retrieving the top-5 chunks by cosine similarity.
C2b replaces TF-IDF with Contriever~\cite{izacard2022contriever} dense embeddings (256-token chunks, 64-token overlap, top-$k$ up to 6{,}000 characters); C2b scores 50.8\% ($\Delta = -1.2\pp$ vs.\ C2; $p = 0.62$), confirming the retrieval method does not explain the C2$\to$C4g gap.
These are the closest analogs to existing medical RAG systems (DoctorRAG, MedRAG) on patient-level cross-admission data.
ClinicalBench conditions map to SliceBench: C1$\!\approx\!$B0, C2$\!\approx\!$B2, C4g$\!\approx\!$B3, C4g+full$\!\approx\!$B4.

\paragraph{Bootstrap.} $n=2{,}000$ resamples, seed 42, BCa method~\cite{efron1993bootstrap}.
Patient-level cluster bootstrap (resampling the 43 patients with replacement, including all their questions per draw) is the inferential method for keyword-endpoint CIs. Question-level resampling is reported as a secondary sensitivity analysis. (Caveat: with $n=43$ clusters this is at the low end of cluster-bootstrap reliability; cf.\ Cameron \& Miller~\cite{cameron2015cluster}; the substantive primary endpoint is the leave-author-out paired exact McNemar in Section~\ref{sec:results:primary_endpoint}, which does not depend on cluster bootstrap.)
Patient-level CIs are slightly wider but all reported endpoints remain significant (Table~\ref{tab:stats_summary}).

\paragraph{McNemar's test.} Bootstrap CIs are supplemented with CIs with McNemar's test for paired nominal data, comparing discordant pairs (C1 wrong/C4g right vs.\ C1 right/C4g wrong). All three pairwise comparisons are significant: C1 vs.\ C4g ($\chi^2 = 155.8$, $p < 10^{-6}$; discordant: 18 vs.\ 204), C1 vs.\ C3 ($\chi^2 = 73.8$, $p < 10^{-6}$; 30 vs.\ 143), and C3 vs.\ C4g ($\chi^2 = 47.2$, $p < 10^{-10}$; 20 vs.\ 93).
On the hard cross-admission subset (secondary endpoint, $n=122$: change $\cup$ current\_state $\cup$ historical, after excluding the 8 experiencer-attribution-defective items in Appendix~\ref{app:bugged_items}), C4g$_{\text{oracle}}$ vs.\ C1 yields $\Delta = +57.4\pp$, $p < 10^{-15}$.

\paragraph{BH-FDR adjustment for cross-model contrasts.} Across the six cross-model C4g$_{\text{oracle}}$ vs.\ C1 McNemar tests reported in Table~\ref{tab:stats_summary} and Section~\ref{sec:results:crossmodel}, Benjamini--Hochberg false-discovery-rate adjustment yields q-values: Opus $q=1.84\times10^{-32}$, GPT-OSS $q=3.14\times10^{-27}$, MedGemma 27B $q=1.22\times10^{-17}$, Gemma~4 31B $q=3.81\times10^{-17}$, Qwen3.5 35B $q=2.00\times10^{-11}$, MedGemma 1.5 4B $q=9.32\times10^{-11}$. BH-FDR applied across the six cross-model contrasts; all $q<10^{-10}$.
BH-FDR is reported here. The more conservative Benjamini--Yekutieli (2001) procedure (which holds under arbitrary dependence, including PRDS violations of paired McNemars on overlapping items) yields equivalent conclusions: BY adjustment factor $2.45\times$ over BH on $m=6$ cross-model contrasts, with all BY q-values $<2.5\times10^{-10}$.

\begin{table}[ht]
\centering
\small
\caption{Statistical summary: patient-level (cluster) and question-level BCa bootstrap 95\% CIs for the change-excluded keyword sensitivity endpoint and other secondary contrasts. The author-blind primary endpoint is the leave-author-out paired exact McNemar in Section~\ref{sec:results:primary_endpoint}; the keyword endpoint is change-excluded ($n=362$, after excluding the 8 experiencer-attribution-defective items in Appendix~\ref{app:bugged_items}) keyword C4g vs.\ C1.}
\label{tab:stats_summary}
\begin{tabular}{@{}lcccc@{}}
\toprule
Comparison & $\Delta$ & Question CI & Patient CI & McNemar $p$ \\
\midrule
C4g$-$C1 (full-set oracle, secondary) & $+46.5\pp$ & \ci{+40.5\pp}{+51.7\pp} & \ci{+40.8\pp}{+52.4\pp} & $< 10^{-6}$ \\
C2b$-$C2 (dense vs.\ keyword) & $-1.2\pp$ & \ci{-6.2\pp}{+3.5\pp} & \ci{-5.6\pp}{+3.5\pp} & $0.62$ \\
C4g$-$C2b (KG-RAG vs.\ dense) & $+17.5\pp$ & \ci{+11.8\pp}{+22.8\pp} & \ci{+12.4\pp}{+22.7\pp} & $< 10^{-6}$ \\
C3$-$C1 (retrieval) & $+28.2\pp$ & \ci{+22.2\pp}{+34.2\pp} & \ci{+23.1\pp}{+33.3\pp} & $< 10^{-6}$ \\
C4$-$C3 (assertions) & $-3.8\pp$ & \ci{-10.8\pp}{+1.8\pp} & \ci{-10.7\pp}{+3.1\pp} & $0.26$ \\
C4g$-$C4 (routing) & $+22.0\pp$ & \ci{+15.5\pp}{+27.8\pp} & \ci{+14.8\pp}{+29.4\pp} & $< 10^{-10}$ \\
C4g$-$C3 (both) & $+18.2\pp$ & \ci{+13.2\pp}{+22.8\pp} & \ci{+13.7\pp}{+23.1\pp} & $< 10^{-10}$ \\
Hard cross-admission ($n=122$) & $+57.4\pp$ & \ci{+47.4\pp}{+65.9\pp} & \ci{+46.8\pp}{+68.6\pp} & $< 10^{-15}$ \\
\bottomrule
\end{tabular}
\end{table}

\paragraph{Data.} De-identified MIMIC-IV clinical notes accessed under a PhysioNet Credentialed Health Data Use Agreement (v3.1).

\paragraph{PhysioNet DUA compliance.} The HuggingFace release of the ClinicalBench artifact (DOI \href{https://doi.org/10.57967/hf/8549}{10.57967/hf/8549}) contains: (a) question text (authored from MIMIC-IV charts but rephrased; not raw note text), (b) reference answers (paraphrased clinical findings; not direct note quotations), and (c) raw model predictions (commit \texttt{da5f5b1} stripped raw note excerpts from adjudication items prior to public release). MIMIC-IV note text remains exclusively under PhysioNet credentialed access. Any user replicating the system requires separate PhysioNet authorization.

\paragraph{Released artifacts.} ClinicalBench questions, reference answers (v1 and v2), raw model predictions for all evaluated conditions, the deterministic keyword evaluator, and physician adjudication data are publicly released (Hugging Face DOI \href{https://doi.org/10.57967/hf/8549}{10.57967/hf/8549}). The full EpiKG application stack (graph construction, intent-aware routing implementation, retrieval algorithm) is NOT released in this work. Application-code release is planned for follow-on work; reviewers and downstream users should treat the system contribution here as a probe rather than a reproducible architectural artifact.

\paragraph{Compute.} ClinicalBench Opus C1/C3/C4g: ${\sim}22$min each (API); Opus C6: ${\sim}3.5$h (API); C7: $<1$min (no LLM); MedGemma 27B C1--C4g: ${\sim}3.6$h (single GPU); MedGemma 27B C6: ${\sim}1.5$h; Gemma~4 31B C1/C4g: ${\sim}10$h (Apple Silicon, Ollama); MedGemma 1.5 4B C1/C4g: ${\sim}1.5$h (Apple Silicon, Ollama); SliceBench: ${\sim}2$h (API).

\paragraph{Evaluator evolution.} The keyword evaluator underwent three versions: v0 (substring matching), v1 (word-boundary matching + evidence preamble stripping), and v2 (+ abstention detection gate + domain-specific keyword requirements for sequence and change). The v1 evaluator awarded false positives when models responded with ``insufficient information in the notes'' because negation and temporal keywords in the refusal text matched reference-answer patterns. The v2 evaluator adds an abstention detection layer: answers matching abstention patterns (e.g., ``cannot determine,'' ``not mentioned,'' ``insufficient evidence'') are scored as incorrect unless they contain clinical claim patterns (e.g., ``patient does not,'' ``denies''). Additionally, v2 requires sequence answers to contain ordering keywords (``first,'' ``then,'' ``before'') and change answers to contain change keywords (``added,'' ``removed,'' ``discontinued''), preventing term-overlap-only false positives. The duration category's minimum-0.5 score floor for matching duration keywords was removed.
The v1$\to$v2 transition reduced C1 accuracy (from ${\sim}50\%$ to 21.8\% for Opus) by correctly classifying abstention responses as incorrect; C4g accuracy decreased modestly (from ${\sim}76\%$ to 68.5\%) because the model abstains less frequently when retrieval context is provided.
All main-text numbers use v2.

\begin{sloppypar}
\paragraph{Reproducibility package.} A standalone reproducibility package is included in the supplementary materials (\texttt{epikg-benchmark/}) and available at \url{https://huggingface.co/datasets/alexstinard/epikg-clinicalbench}. It contains all 400 ClinicalBench questions with reference answers, raw model predictions for all LLM-based conditions (Opus C1/C2/C3/C4/C4g/C6, MedGemma 27B C1/C4g, GPT-OSS C1/C4g, Qwen C1/C4g, Gemma~4 31B C1/C4g, MedGemma 1.5 4B C1/C4g), scored outputs for the deterministic baseline (C7), and the keyword evaluator v2 with abstention detection. The evaluator itself requires only Python~3.10+ with no external dependencies; \texttt{reproduce.py} additionally requires NumPy and SciPy for bootstrap CIs. All ClinicalBench accuracy numbers in Tables~\ref{tab:clinicalbench_ablation}, \ref{tab:crossmodel_categories}, and~\ref{tab:crossmodel} are reproducible from this package. MIMIC-IV patient identifiers are included so that PhysioNet-credentialed reviewers can trace predictions back to source clinical notes.
ClinicalBench is publicly available at \url{https://huggingface.co/datasets/alexstinard/epikg-clinicalbench} with Croissant metadata for machine-readable dataset discovery.
\end{sloppypar}

\paragraph{Results provenance.} Table~\ref{tab:provenance} maps each reported result to its source checkpoint file.
MedGemma 27B C4g has 2 empty predictions (timeouts); these are scored as incorrect ($n=400$).
All cross-model C1 and C4g results (Table~\ref{tab:crossmodel}) are from the same system snapshot (February 2026), except Gemma~4 31B (April 2026, added after the initial cross-model batch) and MedGemma 1.5 4B (April 2026, after fixing a checkpoint truncation bug and adding \texttt{<unused94>}/\texttt{<unused95>} Gemma~3 stop tokens in the Ollama Modelfile); intermediate ablation conditions (C2/C3/C4/C6) for non-Opus models were collected in a later batch (March 2026) after system updates and are included in the reproducibility package for completeness but are not used in any paper table.

\begin{table}[ht]
\centering
\small
\caption{Results provenance: checkpoint files for each condition$\times$model combination. All scored with keyword evaluator v2.}
\label{tab:provenance}
\begin{tabular}{@{}llllr@{}}
\toprule
Condition & Model & Checkpoint file & $n$ \\
\midrule
C1 & Opus 4.6 & \texttt{opus/C1\_llm\_alone.jsonl} & 400 \\
C2 & Opus 4.6 & \texttt{opus/C2\_vanilla\_rag.jsonl} & 400 \\
C2b & Opus 4.6 & \texttt{opus/C2b\_dense\_rag.jsonl} & 400 \\
C3 & Opus 4.6 & \texttt{opus/C3\_kg\_rag.jsonl} & 400 \\
C4 & Opus 4.6 & \texttt{opus/C4\_epistemic\_kg\_rag.jsonl} & 400 \\
C4g & Opus 4.6 & \texttt{opus/C4g\_intent\_aware.jsonl} & 400 \\
C6 & Opus 4.6 & \texttt{opus/C6\_long\_context.jsonl} & 400 \\
C7 & --- & \texttt{opus/C7\_deterministic.jsonl} & 400 \\
\midrule
C1 & MedGemma 27B & \texttt{medgemma/C1\_llm\_alone.jsonl} & 400 \\
C4g & MedGemma 27B & \texttt{medgemma/C4g\_intent\_aware.jsonl} & 400* \\
\midrule
C1 & GPT-OSS 20B & \texttt{gptoss/C1\_llm\_alone.jsonl} & 400 \\
C4g & GPT-OSS 20B & \texttt{gptoss/C4g\_intent\_aware.jsonl} & 400 \\
\midrule
C1 & Qwen3.5 35B & \texttt{qwen35/C1\_llm\_alone.jsonl} & 400 \\
C4g & Qwen3.5 35B & \texttt{qwen35/C4g\_intent\_aware.jsonl} & 400 \\
\midrule
C1 & Gemma~4 31B & \texttt{gemma4/C1\_llm\_alone.jsonl} & 400 \\
C4g & Gemma~4 31B & \texttt{gemma4/C4g\_intent\_aware.jsonl} & 400 \\
\midrule
C1 & MedGemma 1.5 4B & \texttt{medgemma15/C1\_llm\_alone.jsonl} & 400 \\
C4g & MedGemma 1.5 4B & \texttt{medgemma15/C4g\_intent\_aware.jsonl} & 400 \\
\bottomrule
\multicolumn{4}{@{}l}{\footnotesize *MedGemma 27B C4g: 2 timeouts; empty answers scored as incorrect ($n=400$).}
\end{tabular}
\end{table}

\subsection{Infrastructure Bugs Discovered and Fixed During Evaluation}
\label{app:infrastructure_bugs}

During final review, three infrastructure issues were discovered and corrected that affected data integrity in earlier runs. They are documented here in the interest of methodological transparency; all three have since been fixed, and the residual impact on reported numbers is described for each.

\paragraph{Bug 1: Checkpoint serialization truncation (500-character limit).}
A line in \texttt{qa\_experiment\_executor.py} wrote \texttt{predicted\_answer[:500]} to checkpoint files, silently truncating every saved answer to 500 characters. Original in-run scoring (the legacy internal evaluator) operated on full answers, but the frozen-evaluator rescoring---which is the source of all numbers in this paper---operated on the truncated strings. The truncation rate scaled with per-model answer verbosity: GPT-OSS $\approx 0.2\%$ of answers affected, MedGemma 27B $\approx 0.5$--$2.8\%$ (all negligible, $<\!1\pp$ impact on frozen rescore), Qwen~3.5 $\approx 0.8$--$4.2\%$ ($<\!1\pp$), Opus~4.6 $\approx 3.5$--$16\%$ (estimated $1$--$3\pp$ downward bias on C4g oracle), and MedGemma~1.5 4B $\approx 7.5$--$35.5\%$ (estimated $2$--$3\pp$ downward bias). Because the bug was applied uniformly across models and conditions, relative rankings are preserved; absolute scores for the most verbose models (Opus, MedGemma~1.5) would shift slightly upward if fully rerun. The slice was removed from the \texttt{[:500]} slice and reran MedGemma~1.5 end-to-end (the most affected model) to obtain clean data; other models retain their original checkpoint data with this documented downward bias.

\paragraph{Bug 2: MedGemma 1.5 special-token duplication.}
MedGemma~1.5 4B (Gemma~3 architecture) emitted the special tokens \texttt{<unused94>} and \texttt{<unused95>} mid-generation, causing the model to regenerate the same answer twice within a single response. The frozen evaluator then scored on the concatenated duplicate text, effectively giving the model two independent chances to match reference keywords. Manual inspection of per-question outputs showed that $26\%$ of MedGemma~1.5 C4g oracle answers contained these tokens, artificially inflating frozen-evaluator scores by approximately $2$--$5\pp$. Only MedGemma~1.5 was affected; the tokens are Gemma~3 specific. The fix was trivial---adding \texttt{<unused94>} and \texttt{<unused95>} to the Ollama Modelfile stop-token list---but the benchmarking impact was significant. All MedGemma~1.5 numbers in this paper were collected after the fix was applied and the model was rerun across all three reported conditions.

\paragraph{Bug 3: Ollama silently discards Qwen~3.5 repetition penalties in thinking mode.}
Qwen~3.5 35B is designed as an extended-reasoning model; its model card explicitly recommends \texttt{presence\_penalty=1.5} to prevent pathological repetition loops during chain-of-thought generation. However, Ollama silently discards \texttt{repeat\_penalty}, \texttt{presence\_penalty}, and \texttt{frequency\_penalty} options when Qwen is used with \texttt{think: true} (Ollama issue \#14493). At \texttt{temperature=0} the model then enters infinite reasoning loops (e.g., ``Wait, I'll write: `X'. Okay, I'll write: `Y'. Wait, I'll write: `X'\ldots'') and consumes the entire token budget without producing any visible content. The issue was verified by testing eight sampling configurations through the Ollama API (\texttt{repeat\_penalty} from $1.3$ to $1.8$, \texttt{frequency\_penalty=0.5}, \texttt{presence\_penalty=0.5}, \texttt{temperature} $\in \{0.3, 0.5\}$, \texttt{mirostat=2}, and stacked combinations); all eight produced identical $29{,}083$-character thinking output with empty content, confirming that Ollama is not forwarding these options to Qwen's thinking generation. Related open Ollama issues include \#14493 (Qwen~3.5 tool calling non-functional and repetition penalties silently ignored), \#14421 (\texttt{qwen3.5:35b} looping), \#10976 (thinking + tools + qwen3 $\Rightarrow$ empty output), \#14716 (\texttt{qwen3.5} vision output routed to thinking field), and \#10927 (LLM stuck in infinite loop of thinking).

As a workaround thinking was disabled (\texttt{think: false}) for all Qwen runs reported in this paper. This produces direct answers but may not reflect Qwen's optimal reasoning performance. Qwen's reported numbers should therefore be interpreted as ``Qwen~3.5 35B without reasoning enabled'' rather than ``Qwen~3.5 35B at full capability.'' This workaround has a visible downstream effect on Qwen's oracle-vs-keyword comparison: a $-1.2\pp$ oracle inversion at the aggregate level that does not match the pattern of the other five tested models. Per-category analysis shows Qwen benefits substantially from oracle routing on historical questions ($+16\pp$) but suffers on \texttt{current\_state} ($-10\pp$) and conditional ($-15\pp$) categories. One hypothesis is that Qwen's constrained non-thinking mode interacts poorly with oracle's focused \texttt{current\_state} retrieval, which discards historical context that non-thinking Qwen appears to rely on. Whether this inversion would persist with thinking enabled---via vLLM or Qwen's native DashScope API, which do honor repetition penalties---is an open question is left for future work.

\subsection{Experiencer-Attribution Defective Items}
\label{app:bugged_items}

During post-hoc verification, 8 items were identified whose source \texttt{section="Family History"} but whose gold \texttt{expected\_answer} asserts the disease as a current or historical condition of the patient. The defect arises from the upstream NLP pipeline mis-propagating an experiencer flag during gold-answer generation: the section tag was correctly retained but the answer string nonetheless described the patient. These items are excluded from the change-excluded keyword endpoint ($n=362$, sensitivity comparator). The headline impact of removing them is $+40.0\pp \to +39.5\pp$ on that endpoint. They remain in the released v2 gold standard for transparency.

\begin{table}[ht]
\centering
\small
\caption{Eight experiencer-attribution-defective items excluded from the change-excluded keyword sensitivity endpoint. All have \texttt{section="Family History"} but gold answers describing the patient.}
\label{tab:bugged_items}
\begin{tabular}{@{}llll@{}}
\toprule
QID & Category & Disease & Bug \\
\midrule
\texttt{bench\_b\_current\_state\_0a964177} & current\_state & Lung cancer & gold says current; FH \\
\texttt{bench\_b\_current\_state\_351fb38e} & current\_state & Lung cancer & gold says current; FH \\
\texttt{bench\_b\_current\_state\_7080eb03} & current\_state & Hypertension & gold says current; FH \\
\texttt{bench\_b\_current\_state\_96ef4bd2} & current\_state & Breast cancer & gold says current; FH \\
\texttt{bench\_b\_current\_state\_9a0fed0b} & current\_state & Colon cancer & gold says active; FH \\
\texttt{bench\_b\_current\_state\_ab1c0783} & current\_state & Colon cancer & gold says current; FH \\
\texttt{bench\_b\_current\_state\_e84d9e91} & current\_state & Breast cancer & gold says current; FH \\
\texttt{bench\_b\_historical\_a701eaf4} & historical & Colon cancer & gold says historical; FH \\
\bottomrule
\end{tabular}
\end{table}

\subsection{Evaluator Polarity: Extended Disclosure}
\label{app:evaluator_polarity}

The keyword evaluator uses category-specific keyword lists and matches by word-boundary regex. For three categories---uncertainty, family\_history, and conditional---the rule is structurally:
\begin{quote}\small\texttt{is\_correct = \_has\_match(predicted\_lower, patterns)}\end{quote}
where \texttt{patterns} is the category-defining keyword list (e.g., \texttt{['if','conditional','pending','depending','only if']} for conditional). The gold \texttt{expected\_answer} is not consulted: any prediction containing a category keyword is scored correct.

For current\_state and historical, the rule matches keyword presence (\texttt{"current"}, \texttt{"active"}, \texttt{"present"} for current\_state; \texttt{"was"}, \texttt{"former"}, \texttt{"resolved"}, \texttt{"history"} for historical) without polarity check. Consequently, a prediction asserting \emph{``NOT FOUND IN CURRENT RECORDS''} matches the keyword \texttt{"current"} and is scored correct against a gold of ``currently active''---directionally opposed but lexically overlapping.

The implication is that C4g's structured-answer style (which routinely echoes the queried category, e.g., ``Current state: \ldots'') is mechanically advantaged over C1's abstention style (``insufficient information''), even when neither answer carries the right clinical content. This contributes to the keyword evaluator's measured 7.5:1 strict-vs-lenient asymmetry (Table~\ref{tab:evaluator_agreement}) and is the principal reason we treat the keyword evaluator as a deterministic reproducibility proxy rather than a substantive truth criterion. Physician adjudication and LLM-as-judge are the substantively interpretable evaluators; their deltas (Section~\ref{sec:results:physician}) are the comparisons we ask readers to weight.

\section{Assertion Category Definitions}
\label{app:assertions}

\begin{table}[ht]
\centering
\small
\begin{tabular}{lp{6.5cm}l}
\toprule
\textbf{Assertion} & \textbf{Definition} & \textbf{Example} \\
\midrule
\textsc{present} & Condition currently affirmed & ``has diabetes'' \\
\textsc{absent} & Condition explicitly negated & ``denies chest pain'' \\
\textsc{possible} & Condition suspected, not confirmed & ``possible pneumonia'' \\
\textsc{conditional} & Contingent on specific circumstances & ``if febrile, start antibiotics'' \\
\textsc{hypothetical} & Discussed as a scenario & ``would need dialysis if\ldots'' \\
\textsc{family\_history} & Attributed to a family member & ``mother had breast cancer'' \\
\textsc{historical} & Previously true, not necessarily current & ``former smoker'' \\
\bottomrule
\end{tabular}
\caption{The 7-value assertion taxonomy, extending i2b2 by separating \textsc{historical} from \textsc{family\_history}.}
\label{tab:assertion_defs}
\end{table}

\section{Temporal Relation Mapping}
\label{app:temporal}

The nine temporal relations $\mathcal{R}$ used on KG edges are derived from Allen's 13 canonical interval relations~\cite{allen1983temporal} by merging symmetric pairs.

\begin{table}[ht]
\centering
\small
\begin{tabular}{@{}lll@{}}
\toprule
\textbf{$\mathcal{R}$ value} & \textbf{Allen source(s)} & \textbf{Meaning} \\
\midrule
\textsc{Before} & Before, Meets & $A$ ends before $B$ starts \\
\textsc{After} & After, Met-by & $A$ starts after $B$ ends \\
\textsc{During} & During & $A$ entirely within $B$ \\
\textsc{Contains} & Contains & $A$ entirely contains $B$ \\
\textsc{Overlaps} & Overlaps, Overlapped-by & $A$ and $B$ partially overlap \\
\textsc{Starts} & Starts, Started-by & $A$ and $B$ share start time \\
\textsc{Finishes} & Finishes, Finished-by & $A$ and $B$ share end time \\
\textsc{Concurrent} & Equals & $A$ and $B$ have same interval \\
\textsc{Unknown} & --- & Relation undetermined \\
\bottomrule
\end{tabular}
\caption{Mapping from Allen's 13 interval relations to the 9 temporal relation values stored on KG edges.}
\label{tab:temporal_relations}
\end{table}

\section{Intent-Aware Routing Algorithm}
\label{app:routing}

The C4g intent classifier operates in two modes.
In the \emph{oracle} mode used for benchmark evaluation, the question's category metadata determines the intent directly.
In the \emph{keyword-only} mode used for deployment, a rule-based classifier infers intent from keyword patterns in the question text (\eg ``changed'', ``new since'' trigger \textsc{Change}; ``currently'', ``active problem'' trigger \textsc{Current\_State}).
The keyword classifier achieves 68\% overall accuracy but only 20.0\% on questions requiring targeted routing; per-category accuracy is reported in Table~\ref{tab:keyword_classifier}.
All benchmark results in the main text use oracle classification unless otherwise noted.
Algorithm~\ref{alg:c4g} details the full retrieval procedure.

\begin{algorithm}[ht]
\caption{Intent-Aware Retrieval (C4g)}\label{alg:c4g}
\KwIn{Question $q$, patient $\pi$}
\KwOut{Structured evidence $E$}
$\mathcal{C} \gets \textsc{ExtractConcepts}(q)$ \tcp*{NLP + OMOP enrichment}
$\iota \gets \textsc{ClassifyIntent}(q)$ \tcp*{$\in \{\textsc{Change}, \textsc{CurrSt}, \textsc{Hist}, \textsc{Default}\}$}
\uIf{$\iota = \textsc{Change}$}{
  Partition edges by admission: $\mathcal{E}_k \gets \{e \mid \texttt{hadm\_id}(e) = k\}$\;
  \ForEach{admission pair $(k, k')$ with $k < k'$}{
    $\mathcal{A} \gets \mathcal{C}_{k'} \setminus \mathcal{C}_k$; \quad
    $\mathcal{R} \gets \mathcal{C}_k \setminus \mathcal{C}_{k'}$; \quad
    $\mathcal{S} \gets \mathcal{C}_k \cap \mathcal{C}_{k'}$\;
  }
  $E_g \gets \textsc{FormatChange}(\mathcal{A}, \mathcal{R}, \mathcal{S})$\;
}
\uElseIf{$\iota = \textsc{CurrSt}$}{
  $E_g \gets \textsc{FilterEdges}(\pi, \mathcal{C},\; \tau_a\!=\!\textsc{Current} \lor \text{open validity})$\;
  Deduplicate by concept; emit ``\textsc{Not Found}'' for missing $c \in \mathcal{C}$\;
}
\uElseIf{$\iota = \textsc{Hist}$}{
  $E_g \gets \textsc{FilterEdges}(\pi, \mathcal{C},\; \tau_a\!=\!\textsc{Past})$\;
  Augment: concepts in earlier admissions but absent from latest $\to$ ``resolved''\;
}
\Else{
  $E_g \gets \textsc{BidirectionalBFS}(\pi, \mathcal{C},\; \text{hops}=2\text{--}3,\; c_{\min}=0.3)$\;
}
$E_d \gets \textsc{RetrieveDocuments}(\pi, \mathcal{C})$\;
\Return $\textsc{Compose}(E_g, E_d,\; \text{template}(\iota))$\;
\end{algorithm}

\subsection{Keyword-Only Intent Classifier Accuracy}
\label{app:keyword_classifier}

Table~\ref{tab:keyword_classifier} reports per-category accuracy of the keyword-only intent classifier alongside the oracle--keyword accuracy gap on Opus C4g.
The keyword classifier achieves 68\% overall classification accuracy but only 20.0\% on questions requiring targeted routing (historical: 0\%, family history: 0\%, current state: 34\%, change: 50\%).
Categories routed to \textsc{Default} (negation, uncertainty, conditional, duration, sequence) are unaffected by classification errors because both oracle and keyword paths use the same default BFS traversal.

\begin{table}[ht]
\centering
\small
\caption{Per-category keyword intent classifier accuracy and its impact on Opus C4g QA accuracy ($n=400$). ``Classifier acc.'' is the fraction of questions where the keyword classifier matches oracle intent. Categories marked ``Default'' are routed identically under both classifiers.}
\label{tab:keyword_classifier}
\begin{tabular}{@{}lrcccr@{}}
\toprule
Category & $n$ & Classifier Acc. & Oracle & Keyword & $\Delta$ \\
\midrule
change & 30 & 50.0\% & 96.7\% & 30.0\% & $-66.7\pp$ \\
current\_state & 50 & 34.0\% & 70.0\% & 66.0\% & $-4.0\pp$ \\
family\_history & 30 & 0.0\% & 56.7\% & 46.7\% & $-10.0\pp$ \\
historical & 50 & 0.0\% & 60.0\% & 60.0\% & $\phantom{-}0.0\pp$ \\
negation & 110 & Default & 81.8\% & 75.5\% & $-6.3\pp$ \\
sequence & 40 & Default & 62.5\% & 55.0\% & $-7.5\pp$ \\
uncertainty & 40 & Default & 50.0\% & 50.0\% & $\phantom{-}0.0\pp$ \\
conditional & 20 & Default & 45.0\% & 45.0\% & $\phantom{-}0.0\pp$ \\
duration & 30 & Default & 63.3\% & 70.0\% & $+6.7\pp$ \\
\midrule
\textbf{Overall} & \textbf{400} & 68.0\% & \textbf{68.5\%} & \textbf{60.2\%} & $\mathbf{-8.3\pp}$ \\
\bottomrule
\end{tabular}
\end{table}

\begin{figure*}[t]
\centering
\includegraphics[width=\textwidth]{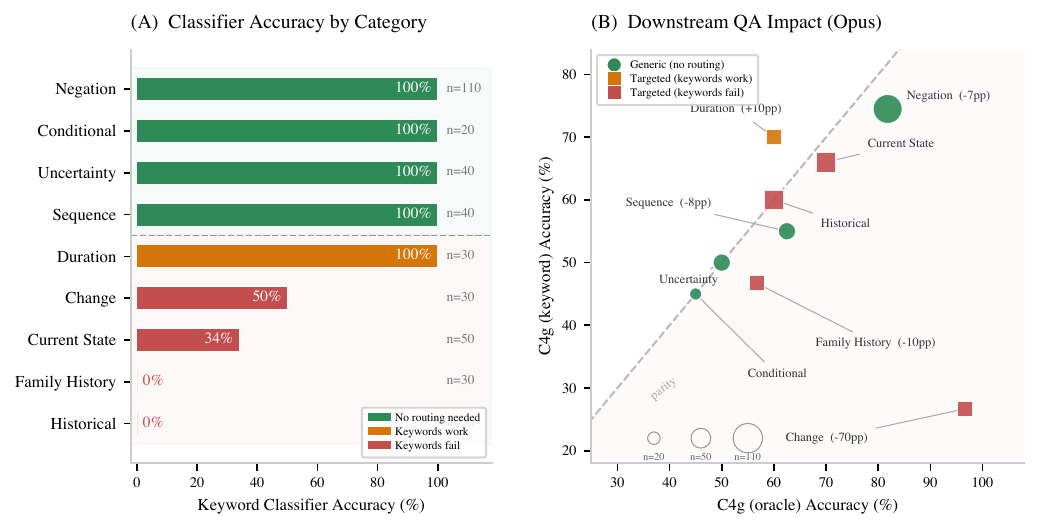}
\caption{Keyword intent classifier analysis (Opus). \textbf{(A)}~Per-category classifier accuracy: categories requiring no targeted routing (top, green) achieve 100\%; categories needing routing show 0--50\% keyword accuracy (bottom, red/orange). \textbf{(B)}~Downstream QA impact: categories near the parity line (dashed) have small oracle--keyword gaps; change ($-70\pp$) is the major outlier, while duration ($+10\pp$) benefits from keyword routing.}
\label{fig:classifier_detail}
\end{figure*}

\section{Bookend and Diagnostic Conditions}
\label{app:c5}

\paragraph{C5 (Full System).}
C5 extends C4 with guideline retrieval (1{,}202 sections) and clinical calculators (201, \eg CHA$_2$DS$_2$-VASc, MELD).
In a prior evaluator run, C5 scored below C4g, likely because non-relevant components (guidelines and calculators) dilute the context window without contributing to ClinicalBench question types.
C5 is excluded from the current ablation ladder because it conflates retrieval architecture with additional knowledge sources.

\paragraph{C6 (Long Context).}
All patient documents are concatenated chronologically and presented to the LLM.
For Opus (200K token window), all notes fit; for MedGemma (8K window), later documents are truncated.
C6 achieves 59.2\% (Opus) overall---well above C1 (21.8\%) and modestly below C4g$_\text{oracle}$ (68.5\%, $-9.3\pp$, $p = 0.001$).
The gap concentrates in current state (C6 18.0\% vs.\ C4g 70.0\%, $-52\pp$); on temporal categories C6 is comparable (historical 62.0\%, sequence 82.5\%) or weaker only modestly (duration 43.3\% vs.\ C4g 63.3\%).
This indicates brute-force long context handles factual recall and temporal retrieval reasonably well but underperforms structured retrieval on intent-sensitive queries, where the epistemic KG's explicit assertion typing distinguishes resolved from active conditions in ways implicit raw-text reading does not.

\paragraph{C7 (Deterministic KG).}
KG edges matching query concepts are returned directly without LLM reasoning.
C7 nominally scores 27.2\% overall, but this is an evaluator artifact: C7 returns template refusals (``No relevant knowledge graph edges found'') for $>$98\% of questions, and the word ``No'' in the template coincidentally matches negation-category keywords (negation: 99.1\%, all other categories: 0\%).
Semantic accuracy is 0\%, confirming that structured data without an LLM reasoner cannot answer clinical questions.

\section{Illustrative Retrieval Example}
\label{app:example}

To illustrate why intent-aware routing matters, consider a change question: \emph{``What medications changed between this patient's first and second admissions?''}

\paragraph{C1 (LLM alone).}
The model sees only the current note, which mentions metoprolol, atorvastatin, and ``discontinued lisinopril.'' Without prior admission records, it cannot determine what was \emph{added} vs.\ \emph{continued}, and may hallucinate prior medication lists.

\paragraph{C4g (intent-aware KG-RAG).}
The intent classifier routes to \textsc{Change} retrieval, which partitions KG edges by \texttt{hadm\_id} and computes set differences:
\begin{itemize}[nosep,leftmargin=1.5em]
\item \textbf{Added} (admission~2 only): atorvastatin~40mg [\textsc{Present}]
\item \textbf{Removed} (admission~1 only): lisinopril~10mg [\textsc{Present}$\to$\textsc{Absent}]
\item \textbf{Continued}: metoprolol~25mg [\textsc{Present}, both admissions]
\end{itemize}
The assertion labels (\textsc{Present}, \textsc{Absent}) disambiguate: ``discontinued lisinopril'' is not a current medication but a historical one whose status changed---exactly the distinction the epistemic schema preserves.

\section{LLM-as-Judge Concordance}
\label{app:llm_judge}

To complement the deterministic keyword evaluator, all were rescored for 800 predictions (400 questions $\times$ 2 conditions: C1 and C4g) using Claude Opus 4.6 as an LLM judge.
The judge prompt presents the question, reference answer, and system answer, and requests a score of 1 (correct), 0.5 (partially correct), or 0 (incorrect) with a one-sentence justification.
Table~\ref{tab:llm_judge} summarizes the concordance.

\begin{table}[ht]
\centering
\small
\caption{Evaluator concordance comparison. LLM judge scores $\geq 0.5$ treated as correct for binary comparison. Physician concordance computed on the $n=30$ pilot adjudication subset.}
\label{tab:llm_judge}
\begin{tabular}{@{}lcc@{}}
\toprule
\textbf{Metric} & \textbf{Keyword v2} & \textbf{LLM Judge} \\
\midrule
\multicolumn{3}{@{}l}{\textit{vs.\ Keyword evaluator ($n=800$)}} \\
Agreement & --- & 78.9\% \\
Cohen's $\kappa$ & --- & 0.572 \\
\midrule
\multicolumn{3}{@{}l}{\textit{vs.\ Physician ($n=30$)}} \\
Agreement & 46.7\% & 56.7\% \\
Too strict & 43.3\% & 36.7\% \\
Too lenient & 10.0\% & 6.7\% \\
\midrule
\multicolumn{3}{@{}l}{\textit{Condition-level accuracy}} \\
C1 accuracy & 21.8\% & 28.5\% \\
C4g accuracy & 68.5\% & 57.0\% \\
C4g$-$C1 $\Delta$ & $+46.8\pp$ & $+28.5\pp$ \\
\bottomrule
\end{tabular}
\end{table}

\paragraph{Key findings.}
The LLM judge achieves higher physician concordance (56.7\% vs.\ 46.7\%) and is less prone to false strictness (36.7\% vs.\ 43.3\%), confirming that the keyword evaluator's conservative bias inflates the measured delta.
Under the LLM judge, C1 rises from 21.8\% to 28.5\% (the judge credits clinically reasonable hedging that keyword matching penalizes), while C4g drops from 68.5\% to 57.0\% (the judge penalizes partial answers that pass keyword matching via term overlap).
The resulting C4g$-$C1 delta under LLM judge ($+28.5\pp$) is lower than the keyword delta ($+46.8\pp$) but remains substantial and directionally consistent.
The per-condition asymmetry---C1 gains, C4g loses---is consistent with the physician finding that the keyword evaluator disproportionately penalizes C1 abstentions.

\paragraph{Per-category analysis.}
The LLM judge is particularly stricter on change (C4g: 62\% mean score vs.\ 100\% keyword) because it penalizes partial medication lists that contain correct change keywords but miss specific drugs.
Conversely, the judge is more generous on historical (C4g: 62\% vs.\ 60\% keyword) and negation (C4g: 79\% vs.\ 81\% keyword), where its semantic understanding better captures correct answers that use varied phrasing.

\paragraph{Limitations.}
Using the same model family (Opus) for both answering and judging introduces potential self-preference bias.
The judge also shows moderate agreement with the keyword evaluator ($\kappa = 0.572$), indicating they capture partially overlapping but distinct aspects of answer quality.

\section{Assertion Classifier Evaluation}
\label{app:assertion_eval}

The rule-based assertion classifier uses 122 calibrated trigger patterns extending the NegEx~\cite{chapman2001negex} and ConText~\cite{harkema2009context} frameworks to a 7-class taxonomy (Table~\ref{tab:assertion_defs}).
Table~\ref{tab:assertion_inventory} summarizes the pattern inventory and confidence ranges by category.

\begin{table}[ht]
\centering
\small
\caption{Assertion trigger pattern inventory. Confidence ranges reflect per-trigger calibration.}
\label{tab:assertion_inventory}
\begin{tabular}{@{}lrcl@{}}
\toprule
Category & Patterns & Confidence & Example triggers \\
\midrule
Absent & 31 & 0.85--0.98 & ``no evidence of,'' ``denies,'' ``ruled out'' \\
Uncertain & 32 & 0.35--0.75 & ``possible,'' ``likely,'' ``consistent with'' \\
Present & 27 & 0.85--0.98 & ``confirmed,'' ``diagnosed with,'' ``positive for'' \\
Hypothetical & 11 & 0.25--0.35 & ``risk of,'' ``screening for,'' ``prophylaxis'' \\
Family history & 8 & 0.85--0.95 & ``family history of,'' ``maternal history'' \\
Conditional & 7 & 0.20--0.40 & ``if,'' ``contingent on,'' ``depending on'' \\
Historical & 6 & 0.75--0.90 & ``history of,'' ``former,'' ``remote history'' \\
\midrule
\textbf{Total} & \textbf{122} & 0.20--0.98 & \\
\bottomrule
\end{tabular}
\end{table}

\paragraph{Corpus statistics.}
Across the 43 ClinicalBench patients, the knowledge graph contains 3{,}943 edges linked to clinical facts.
Of these, 618 (15.7\%) carry non-present assertions: 442 absent (11.2\%), 94 possible (2.4\%), 71 historical (1.8\%), 8 conditional (0.2\%), and 3 hypothetical (0.1\%).
At the mention level, 1{,}428 of 12{,}379 mentions (11.5\%) are non-present.
This non-present fraction $f_{\textit{np}} = 0.157$ provides the empirical bound from Corollary~\ref{cor:faithfulness}: an assertion-blind pipeline cannot exceed $1 - f_{\textit{np}} = 84.3\%$ assertion-faithful accuracy on concepts where negated or uncertain mentions are present.

\paragraph{Intrinsic evaluation.}
A stratified sample of 189 mentions from the 43 ClinicalBench patients, stratified by predicted assertion type (50 present, 51 absent, 15 possible, 28 historical, 19 conditional, 4 hypothetical, 10 family history), and a physician (A.S.) annotated reference-standard labels in a blinded, randomized review.
Table~\ref{tab:assertion_intrinsic} reports per-class precision, recall, and F1.

\begin{table}[ht]
\centering
\small
\caption{Intrinsic assertion classifier evaluation on 189 physician-annotated
  MIMIC-IV mentions (stratified sample from 43 ClinicalBench patients).}
\label{tab:assertion_intrinsic}
\begin{tabular}{@{}lrrrr@{}}
\toprule
Assertion & $n$ & P & R & F1 \\
\midrule
Present & 62 & 0.980 & 0.790 & 0.875 \\
Absent & 51 & 0.980 & 0.961 & 0.970 \\
Possible & 15 & 0.500 & 1.000 & 0.667 \\
Historical & 28 & 0.933 & 1.000 & 0.966 \\
Conditional & 19 & 1.000 & 0.789 & 0.882 \\
Hypothetical & 4 & 1.000 & 1.000 & 1.000 \\
Family history & 10 & 0.900 & 0.900 & 0.900 \\
\midrule
\textbf{Weighted avg} & 189 & 0.933 & 0.894 & 0.902 \\
\bottomrule
\end{tabular}
\end{table}

Overall accuracy is 89.4\% (169/189; 95\% Wilson CI: [84.2\%, 93.0\%]) with Cohen's $\kappa = 0.867$ (strong agreement).
Negation detection achieves F1\,=\,0.970, consistent with published benchmarks: NegEx P\,=\,94.5\%/R\,=\,77.8\%~\cite{chapman2001negex}, NegBio P\,=\,96.3\%/R\,=\,85.7\%~\cite{peng2018negbio}.
The dominant error pattern (11/20 errors) is over-triggering uncertainty: the classifier assigns \textsc{possible} to mentions near hedging language (``likely,'' ``concerning for'') that physicians judge as \textsc{present}.
This explains the low precision for the \textsc{possible} class (P\,=\,0.50) despite perfect recall.

\paragraph{Functional evaluation.}
Rather than relying solely on intrinsic metrics, the C4 ablation provides a functional evaluation of assertion quality.
C4 adds assertion metadata to C3's graph without intent routing: the classifier's output is directly consumed by the retrieval pipeline.
On assertion-sensitive categories (negation, conditional, uncertainty, family history), C4 outperforms C3 by an average of $+16.1\pp$, confirming that the classifier produces usable assertions for these categories.
On temporal categories (historical, sequence, change, current state, duration), C4 underperforms C3 by an average of $-24.5\pp$---not because assertions are incorrect, but because uniform BFS traversal cannot exploit them effectively.
The C4$\to$C4g routing correction recovers these losses and amplifies the gains, achieving $+22.0\pp$ overall ($p < 10^{-10}$).

\section{Physician Adjudication: Full Results and Reference Answer Evolution}
\label{app:adjudication_full}
\label{app:gold_evolution}

This section reports complete results from the blinded physician adjudication ($n=120$ paired questions) and documents the reference-answer evolution.

\subsection{Per-Category Physician Accuracy}

\begin{table}[ht]
\centering
\small
\caption{Physician-judged accuracy by category and condition ($n$ per condition). Strict = correct only; lenient = correct + partially correct. Categories sorted by C4g strict accuracy.}
\label{tab:physician_percategory}
\begin{tabular}{@{}lrcccccc@{}}
\toprule
& & \multicolumn{3}{c}{C1} & \multicolumn{3}{c}{C4g} \\
\cmidrule(lr){3-5} \cmidrule(lr){6-8}
Category & $n$ & Strict & Lenient & Keyword & Strict & Lenient & Keyword \\
\midrule
Historical & 15 & 40.0\% & 66.7\% & 6.7\% & 86.7\% & 86.7\% & 46.7\% \\
Conditional & 10 & 60.0\% & 80.0\% & 0.0\% & 80.0\% & 100.0\% & 50.0\% \\
Family hist. & 10 & 50.0\% & 90.0\% & 0.0\% & 80.0\% & 90.0\% & 50.0\% \\
Uncertainty & 20 & 25.0\% & 45.0\% & 5.0\% & 75.0\% & 90.0\% & 50.0\% \\
Duration & 10 & 30.0\% & 60.0\% & 40.0\% & 70.0\% & 90.0\% & 50.0\% \\
Current st. & 15 & 33.3\% & 53.3\% & 20.0\% & 66.7\% & 86.7\% & 46.7\% \\
Negation & 5 & 0.0\% & 20.0\% & 40.0\% & 60.0\% & 80.0\% & 40.0\% \\
Sequence & 20 & 10.0\% & 45.0\% & 10.0\% & 55.0\% & 90.0\% & 50.0\% \\
Change & 15 & 6.7\% & 20.0\% & 0.0\% & 0.0\% & 46.7\% & 100.0\% \\
\midrule
\textbf{Overall} & \textbf{120} & \textbf{27.5\%} & \textbf{52.5\%} & \textbf{10.8\%} & \textbf{62.5\%} & \textbf{84.2\%} & \textbf{55.0\%} \\
\bottomrule
\end{tabular}
\end{table}

The keyword evaluator dramatically underscores C1 on conditional (0\% vs.\ 80\% physician lenient) and family history (0\% vs.\ 90\%), where the model gives clinically reasonable hedged answers that lack specific keywords.
The change category exhibits the opposite pattern: the keyword evaluator reports C4g at 100\%, but the physician rates it at 0\% strict / 46.7\% lenient---keyword matching catches medication names without verifying comparison logic.
Family history shows no physician-judged delta (both conditions score ${\sim}90\%$ lenient), suggesting that the LLM already handles this category well without KG-RAG when evaluated by a physician---a finding masked by the keyword evaluator, which scores both conditions at 0\% on C1.

\subsection{Evaluator Agreement}

\begin{table}[ht]
\centering
\small
\caption{Overall keyword evaluator agreement with physician judgment ($n=240$ items from full adjudication). The evaluator is overwhelmingly too strict (7.5:1 strict-to-lenient ratio), and the majority of errors trace to reference-answer defects rather than evaluator logic.}
\label{tab:evaluator_agreement}
\begin{tabular}{@{}lrc@{}}
\toprule
Evaluator Outcome & $n$ & \% \\
\midrule
Agrees with physician & 130 & 54.2\% \\
Too strict (false negative) & 97 & 40.4\% \\
Too lenient (false positive) & 13 & 5.4\% \\
\midrule
Strict:lenient ratio & \multicolumn{2}{c}{7.5:1} \\
\bottomrule
\end{tabular}
\end{table}

The majority of evaluator errors (63\% of false negatives, 85\% of false positives) trace to reference-answer errors rather than evaluator logic: when the reference answer is correct, the evaluator achieves 64.2\% physician agreement with only a 1.9\% false-positive rate.
The low $\kappa$ (0.18) despite 54.2\% raw agreement confirms the evaluator's errors are systematic (overwhelmingly too strict) rather than random.
Table~\ref{tab:evaluator_by_category} breaks this down by category.

\begin{table}[ht]
\centering
\small
\caption{Keyword evaluator agreement with physician judgment by category. Categories where the evaluator fails worst are highlighted.}
\label{tab:evaluator_by_category}
\begin{tabular}{@{}lrccc@{}}
\toprule
Category & $n$ & Agreement & Too Strict & Too Lenient \\
\midrule
Conditional & 20 & 35.0\% & 65.0\% & 0.0\% \\
Family hist. & 20 & 35.0\% & 65.0\% & 0.0\% \\
Historical & 30 & 50.0\% & 50.0\% & 0.0\% \\
Uncertainty & 40 & 55.0\% & 42.5\% & 2.5\% \\
Current state & 30 & 56.7\% & 40.0\% & 3.3\% \\
Sequence & 40 & 57.5\% & 40.0\% & 2.5\% \\
Duration & 20 & 60.0\% & 35.0\% & 5.0\% \\
Change & 30 & 66.7\% & 6.7\% & 26.7\% \\
Negation & 10 & 70.0\% & 20.0\% & 10.0\% \\
\bottomrule
\end{tabular}
\end{table}

\subsection{Safety and Clinical Utility by Condition}

\begin{table}[ht]
\centering
\small
\caption{Physician-judged clinical safety and utility by condition ($n=120$ paired questions). C4g is safer and more useful than C1; the ``misleading'' rate is similar between conditions.}
\label{tab:physician_safety}
\begin{tabular}{@{}lcccc@{}}
\toprule
Dimension & Rating & C1 & C4g & $\Delta$ \\
\midrule
\multirow{3}{*}{Safety} & Safe & 60.8\% & 76.7\% & $+15.8\pp$ \\
 & Minor concern & 33.3\% & 20.8\% & $-12.5\pp$ \\
 & Potentially harmful & 5.8\% & 2.5\% & $-3.3\pp$ \\
\midrule
\multirow{4}{*}{Utility} & Helpful & 30.8\% & 67.5\% & $+36.7\pp$ \\
 & Neutral & 20.0\% & 15.0\% & $-5.0\pp$ \\
 & Not useful & 34.2\% & 2.5\% & $-31.7\pp$ \\
 & Misleading & 15.0\% & 15.0\% & $\phantom{+}0.0\pp$ \\
\bottomrule
\end{tabular}
\end{table}

Table~\ref{tab:safety_by_category} breaks safety down by category.

\begin{table}[ht]
\centering
\small
\caption{Clinical safety ratings by category in the audited record set. Categories sorted by safety concern rate.}
\label{tab:safety_by_category}
\begin{tabular}{@{}lrccc@{}}
\toprule
Category & $n$ & Safe & Minor & Harmful \\
\midrule
Change & 31 & 16.1\% & 74.2\% & 9.7\% \\
Uncertainty & 40 & 57.5\% & 35.0\% & 7.5\% \\
Current st. & 30 & 63.3\% & 33.3\% & 3.3\% \\
Negation & 10 & 70.0\% & 20.0\% & 10.0\% \\
Historical & 30 & 73.3\% & 23.3\% & 3.3\% \\
Sequence & 40 & 87.5\% & 12.5\% & 0.0\% \\
Conditional & 20 & 90.0\% & 5.0\% & 5.0\% \\
Duration & 20 & 90.0\% & 10.0\% & 0.0\% \\
Family hist. & 20 & 95.0\% & 5.0\% & 0.0\% \\
\bottomrule
\end{tabular}
\end{table}

\noindent Change is the most safety-concerning category: 83.9\% of change items have safety concerns (minor or harmful), mirroring the reference-answer quality and model accuracy findings.

\subsection{Qualitative Themes from Physician Notes}

Free-text notes (165/241 items, 68.5\%) reveal five recurring themes:

\begin{enumerate}[leftmargin=*, nosep]
\item \textbf{Reference answers systematically wrong for medication change questions} (103 notes mentioning reference-answer issues): Every reference answer in the change category conflates inpatient medication orders (heparin, IV antibiotics, CIWA protocol) with discharge medications.

\item \textbf{C1 hallucinates from limited context} (15 notes, exclusively C1): Without retrieval, C1 fabricates admission IDs, medication names, and clinical scenarios. Zero C4g items received this complaint.

\item \textbf{NLP assertion classifier propagates errors} (8 notes): Boilerplate discharge instructions (``call if fever $>$101.5'') tagged as clinical findings; ``h/o recently diagnosed metastatic cancer'' tagged as historical.

\item \textbf{Safety-critical errors} (10 items flagged as potentially harmful): Code status errors (model ``hallucinates DNR confirmation'' when chart says full code), active cancer missed from medication list, anticoagulation misclassified.

\item \textbf{Model praised when reference answers were wrong} (63 notes): Reviewer noted the model gave clinically correct answers that the automated benchmark reference penalized.
\end{enumerate}

\subsection{Reference Answer Version History}

The ClinicalBench reference answers have undergone iterative refinement:

\begin{itemize}[leftmargin=*, nosep]
\item \textbf{v1 (auto-generated reference set)}: Reference answers created by LLM from MIMIC-IV notes via the NLP extraction pipeline. No physician review. 400 questions.

\item \textbf{v2 (partially corrected reference set)}: From the initial $n=30$ pilot, 54 corrections were applied to the most egregious errors.\footnote{The shipped \texttt{corrections.json} file documents 53 explicit corrections; the diff between v1 and v2 \texttt{expected\_answer} fields contains 54 differing items, because one item was adjusted post-corrections.json during a final reconciliation pass.} This is the version used for all reported numbers. Both v1 and v2 are released.

\item \textbf{v3 (planned physician-validated release)}: Full corrections from the $n=120$ adjudication plus external validation. Triage summary: 45 questions KEEP (37\%), 55 FIX\_GOLD (46\%), 20 REPLACE\_QUESTION (17\%). Of the 55 FIX\_GOLD items, 46 have drafted proposed corrections. This future release is not used for any numbers reported in this paper.
\end{itemize}

\subsection{Systematic Error Taxonomy}

Five systematic failure modes account for all 78 problematic questions (of 120 adjudicated):

\begin{enumerate}[leftmargin=*, nosep]
\item \textbf{NLP assertion classifier error} (28 questions, 36\%): The dominant failure. Manifests as: ``history of heart failure'' $\to$ ``heart failure is resolved'' (clinical idiom means active chronic condition); ``edema, likely due to noncompliance'' $\to$ ``edema is uncertain'' (causal vs.\ existential uncertainty conflation); experiencer tag reversal (patient's atrial fibrillation labeled as family history).

\item \textbf{Wrong answer / inverted truth} (16 questions, 21\%): The reference answer states the opposite of the chart. Example: the reference says pitting edema is absent when PE documents ``2+ pitting edema bilaterally.''

\item \textbf{Non-clinical entity extraction} (11 questions, 14\%): NLP extracted boilerplate (``call if fever $>$101''), devices (Foley catheter as diagnosis), lab values (blood sugar as diagnosis), or section headers (``Allergies'' as medical condition).

\item \textbf{Medication list conflation} (10 questions, 13\%): Change questions compared wrong lists---inpatient orders (heparin, IV antibiotics) vs.\ discharge medications, or admission med-rec vs.\ discharge list. PRN-only medications (CIWA Valium) counted as prescribed.

\item \textbf{Fabricated temporal relationship} (8 questions, 10\%): Sequence questions claimed ordering not supported by the chart---both conditions in the same admission with no temporal anchoring, or based on NLP-extracted entities from negated text.
\end{enumerate}

\subsection{Impact on Reported Numbers}

Because the detected defects are question-level rather than condition-specific, they are less likely to reverse the direction of the C1--C4g comparison.
They do, however, materially affect absolute accuracies and some category-level magnitudes, especially for change and historical questions.
The keyword evaluator achieves 64.2\% physician agreement and only 1.9\% false-positive rate when restricted to questions with correct reference answers, confirming that evaluator errors are dominated by reference-answer noise rather than matching logic.

\section{Cohort Demographics and Subgroup Analysis}
\label{app:demographics}

The 43 ClinicalBench patients are drawn from MIMIC-IV~\cite{johnson2023mimiciv}, a single academic medical center (Beth Israel Deaconess Medical Center, Boston).
The cohort skews female (60.5\%), White (76.7\%), and Medicare-insured (44.2\%), reflecting the source institution's patient mix.

\paragraph{Subgroup accuracy.}
Table~\ref{tab:subgroup_accuracy} reports C1 and C4g accuracy by demographic group.
Because questions are distributed unevenly across patients and group sizes are small ($n \leq 26$ patients per stratum), these comparisons are severely underpowered; they are reported for transparency, not for drawing subgroup conclusions.
No statistically significant interaction between demographic group and condition was observed, but the analysis cannot rule out meaningful effect modification.

\begin{table}[ht]
\centering
\footnotesize
\caption{ClinicalBench accuracy by demographic subgroup (underpowered; reported for transparency). $n_q$ = number of questions in each stratum.}
\label{tab:subgroup_accuracy}
\begin{tabular}{@{}llrrrr@{}}
\toprule
Characteristic & Group & $n_{\text{pts}}$ & $n_q$ & C1 (\%) & C4g (\%) \\
\midrule
\multirow{2}{*}{Sex} & Female & 26 & 249 & 22.5 & 69.9 \\
 & Male & 17 & 151 & 20.5 & 67.5 \\
\midrule
\multirow{2}{*}{Age} & $<$65 & 28 & 258 & 20.5 & 69.4 \\
 & $\geq$65 & 15 & 142 & 23.9 & 68.3 \\
\midrule
\multirow{2}{*}{Race} & White & 33 & 290 & 22.4 & 71.7 \\
 & Non-White & 10 & 110 & 20.0 & 61.8 \\
\bottomrule
\end{tabular}
\end{table}

\paragraph{Generalizability limitations.}
The single-site, predominantly White cohort limits external validity.
MIMIC-IV emergency department notes are heavily templated; performance on narrative-heavy specialties (psychiatry, palliative care) or community hospital documentation styles is unknown.
Multi-site validation with demographically diverse cohorts is needed before deployment claims can be made.

\section{Ethics, Broader Impacts, and Detailed Threat Analysis}
\label{app:ethics}
\label{app:threats}

This work uses de-identified clinical data from MIMIC-IV~\cite{johnson2023mimiciv} under a PhysioNet Credentialed Health Data Use Agreement. No patient re-identification was attempted. The system is designed for clinical decision \emph{support}, not autonomous clinical decision-making.

\paragraph{Broader impacts.}
Improved assertion-faithful retrieval could reduce clinical errors caused by negation or family-history misattribution, particularly in high-volume settings where physicians cannot review every note.
However, deployment risks include over-reliance on automated epistemic labeling (false confidence in assertion status), brittleness to out-of-distribution clinical language, and the potential for structured outputs to appear more authoritative than their accuracy warrants.
Responsible deployment requires prospective evaluation, integration with clinician workflows, and clear communication of system limitations.
Because ClinicalBench is single-site, predominantly White, and built from heavily templated MIMIC-IV documentation, a system or benchmark tuned on it could overfit note style and underperform on other hospitals, specialties, or populations.
This creates a coverage and fairness risk: strong results on this stress test could be mistaken for portable performance when they may primarily reflect source-institution conventions.
ClinicalBench should be used for failure analysis and ablation, not as evidence of deployment readiness or demographic robustness.

\paragraph{Evaluator bias characterization.}
The keyword evaluator has known limitations.
Negation scoring checks keyword presence without verifying direction (e.g., ``pneumonia is absent'' and ``patient has pneumonia'' could both match).
Longer model outputs naturally contain more keyword matches, creating a verbosity bias (Opus averages 337 characters vs.\ GPT-OSS 139 characters).
Echo-stripping may differentially affect structured vs.\ prose outputs.
These biases primarily affect \emph{cross-model} comparisons; within-model ablations (C1 vs.\ C4g) use the same model's output format across conditions, so evaluator biases cancel.

\paragraph{Reference-answer correction methodology.}
Benchmark label quality is substantially imperfect: full physician adjudication ($n=120$ questions, 241 audited records) found a 56\% defect rate in provisional reference answers.
Defects are not random but trace to five systematic pipeline failures: NLP assertion classifier errors (36\% of defects), inverted-truth reference answers (21\%), non-clinical entity extraction (14\%), medication list conflation (13\%), and fabricated temporal relationships (10\%).
The NLP assertion classifier is the dominant source, systematically misinterpreting ``history of X'' as ``X is resolved'' and conflating causal uncertainty with existential uncertainty (Appendix~\ref{app:gold_evolution}).
Reference-answer corrections were made by the lead physician (A.S.), who also designed the system.
To quantify impact, all were rescored for conditions against both the original (v1) and partially corrected (v2) reference sets: v1$\to$v2 corrections improved all models similarly ($+0.5$--$1.2\pp$), preserving within-model deltas in that historical comparison.
All results reported in this paper use the v2 reference set (54 corrections from an initial $n=30$ pilot, reconciled against the v1$\to$v2 \texttt{expected\_answer} diff); both v1 and v2 are released for reproducibility.
A future v3 release will incorporate consensus corrections from the multi-reviewer adjudication once that study is complete; no v3 results are used in this manuscript.
The completed three-rater adjudication (two independent external physicians $\times$ 100 items) addresses single-rater bias; results appear in Section~\ref{sec:results:external}.

\paragraph{Scope and framing.}
This study is an ablation analysis rather than a head-to-head leaderboard against existing medical RAG systems~\cite{luo2025gfmrag,jiang2025kare,lu2025doctorrag}; C2 is used as a proxy baseline for document-level retrieval on this cross-admission task.
Runtime variance from quantized local inference (e.g., MedGemma $\pm10\pp$ across runs) and single-site provenance further limit external validity.

\subsection{Extended External Adjudication Details}
\label{app:external_extended}

\paragraph{Rater-calibration detail.}
Reviewer~3 rates 65/100 model answers as ``correct'' (vs.\ 46/100 Reviewer~1 and 39/100 Reviewer~2) and uses ``partially correct'' only 10/100 times (vs.\ 22/100 and 29/100 respectively): borderline answers are collapsed into ``correct'' on both conditions, compressing the between-condition delta.
Excluding \emph{change}, her delta rises to $+6.8\pp$ (still underpowered, $p=0.65$).
Ordinal linear-weighted pairwise Cohen's $\kappa$ on the 3-class model-answer scale averages 0.463 (quadratic-weighted 0.527) across the three reviewer pairs.
The structurally-independent inter-non-author Cohen's $\kappa$ (Hird $\times$ Nadeem, binary strict) is $0.36$, versus $0.43$--$0.49$ for author-involving pairs and Fleiss' $\kappa = 0.413$ on the full three-rater binary-strict scale; the lower non-author $\kappa$ is consistent with author influence on the other two reviewer pairs and is one reason the three-rater majority-vote $+24.0\pp$ result should be weighted with this dependency in mind.
All 100 items contribute to $\kappa$; the planned 10-item calibration phase was replaced by written instructions only, so no items were excluded.

\paragraph{Reviewer~3 gold-standard pattern.}
Reviewer~3's unusual combination of the strictest gold-standard ratings (31/100 fully correct) alongside the most lenient model-answer ratings (65/100 correct) is internally coherent: she frequently judges the reference answer as wrong while still finding the model's answer clinically reasonable---the same phenomenon the internal audit reports, now independently replicated.

\paragraph{Scoring assumptions.}
Under lenient scoring (``correct or partially correct''), the 3-rater majority-vote delta is $+12.0\pp$ ($p = 0.18$, n.s.) on the full set and $+15.9\pp$ ($p = 0.09$) with \emph{change} excluded---directional but not statistically significant.
The three-rater validation characterizes the physician-perceived magnitude under three scoring assumptions: strict 2/3 majority ($+24.0\pp$, significant), lenient 2/3 majority ($+12.0\pp$, n.s.), and per-reviewer deltas ($+2$ to $+36\pp$).

\subsection{Leave-Author-Out Sensitivity for Three-Rater Majority}
\label{app:leave_author_out}

To assess sensitivity of the $+24.0\pp$ three-rater majority result to author inclusion (Reviewer~1 = A.S., the lead author), we recomputed the C1 vs.\ C4g delta using only the two structurally-independent external raters (Hird and Nadeem). Table~\ref{tab:leave_author_out} summarizes the alternative aggregations.

\begin{table}[ht]
\centering
\small
\caption{Leave-author-out sensitivity for the three-rater external validation ($n=50$ items per condition, strict scoring). The author-involving 3-rater majority (top row, paper headline) is contrasted with author-excluded aggregations.}
\label{tab:leave_author_out}
\begin{tabular}{@{}lrr@{}}
\toprule
Aggregation & $\Delta$ (C4g $-$ C1) & McNemar $p$ \\
\midrule
3-rater majority strict (headline) & $+24.0\pp$ & $0.0075$ \\
Hird-only strict & $+30.0\pp$ & --- \\
Nadeem-only strict & $+2.0\pp$ & --- \\
\textbf{Hird $\times$ Nadeem unanimous strict} & \textbf{$+22.0\pp$} & \textbf{$0.0192$} (paired exact) \\
Hird OR Nadeem strict & $+10.0\pp$ & --- \\
\bottomrule
\end{tabular}
\end{table}

Inter-non-author Cohen's $\kappa = 0.36$ (Hird $\times$ Nadeem) versus $0.43$--$0.49$ for author-involving pairs. The substantive direction survives author exclusion and the magnitude is preserved ($-2\pp$, from $+24.0\pp$ to $+22.0\pp$ under unanimous external agreement); the paired exact McNemar test on the 50 matched questions gives 4 discordant pairs favoring C1 vs.\ 15 favoring C4g, two-sided $p=0.0192$, retaining significance at $\alpha=0.05$.

\subsection{Inter-Rater Agreement on the Gold Standard}
\label{app:gold_iaa}

Inter-rater agreement was measured on \emph{system output ratings} (Fleiss $\kappa = 0.413$ on C1/C4g/equivalent labels; pairwise Cohen $\kappa \in \{0.36, 0.43, 0.49\}$, see Appendix~\ref{app:external_extended} and Section~\ref{sec:results:external}). Agreement on the \emph{correctness of the gold standard answers themselves} was not formally measured in this study---a known gap in the benchmark methodology that limits direct quantification of gold validity.

Indirectly, the external raters separately found 61--64\% reference defect rates on the v2 gold standard (Section~\ref{sec:results:physician}), implying low gold-validity agreement between the v2 reference and external clinical judgment. A v3 gold standard with multi-rater authoring (each item independently authored by $\geq 2$ raters with consensus reconciliation) and explicit IAA on the reference answers themselves is planned for post-publication release; no v3 numbers are reported in this paper.

\subsection{Component-Level F1 of the Assertion Classifier}
\label{app:component_f1}

The 122-pattern rule-based assertion classifier (Appendix~\ref{app:assertion_eval}) was \emph{not} externally benchmarked on i2b2-2010 or n2c2-2010 held-out sets in this work; component-level F1 is reported only from internal validation on a physician-annotated stratified sample of 189 ClinicalBench mentions (accuracy 89.4\%, weighted F1 0.902, Cohen $\kappa = 0.867$; Table~\ref{tab:assertion_intrinsic}). External component-level evaluation in the i2b2/n2c2 tradition is flagged as future work and would strengthen the assertion-preservation claim by separating intrinsic classifier quality from downstream KG-RAG retrieval gains.

\subsection{Regulatory and Governance Framing}
\label{app:regulatory}

\paragraph{Scope.}
This work does NOT establish clinical deployment readiness. The framings below are governance scaffolding for any future deployment effort, not claims about the current paper. EpiKG is evaluated here as a research probe; no intended-use statement, no IRB-approved deployment protocol, and no prospective clinical study is in scope.

\paragraph{SaMD risk class (candidate).}
If deployed for patient-level clinical answers from EHR retrieval, EpiKG would plausibly fall under FDA SaMD Class IIb--IIIa under IMDRF criteria (high-impact decision support over high-severity conditions, where individual answers can influence diagnosis or therapy). Class assignment depends on intended use, clinical context, and clinician oversight model: a tool used for care-team chart-review triage with mandatory clinician adjudication would sit lower on the risk spectrum than one driving patient-facing answers. The system as evaluated is research-only and does not have an intended-use statement.

\paragraph{PCCP elements (per FDA December 2024 final guidance).}
An AI-enabled SaMD deploying EpiKG would specify a Predetermined Change Control Plan covering modifiable components: (a) the intent-aware retrieval policy (routing rules that select between assertion-, temporal-, and entity-typed retrieval slices), (b) the 122-pattern assertion classifier (rule additions, deletions, or modifications), (c) the OMOP vocabulary version (e.g., concept additions in successive Athena releases), and (d) the base LLM versions across the cross-model stack (Claude, MedGemma, Qwen, Gemma~4, GPT-OSS). Each component requires a Description of Modifications, a Modification Protocol, and an Impact Assessment under the December 2024 final guidance. None of these governance artifacts are present in this work; they are flagged as deployment prerequisites.

\paragraph{CHAI Assurance Reporting Checklist self-mapping.}
The CHAI 2024 Assurance Reporting Checklist and the Joint Commission--CHAI September 2025 governance guidance enumerate categories that any deployment effort must address. A high-level self-assessment against those categories follows:

\begin{itemize}[leftmargin=*, nosep]
\item \emph{Governance:} no external clinical-AI governance committee, model risk-management board, or institutional review structure is engaged.
\item \emph{Privacy:} de-identified MIMIC-IV under PhysioNet credentialed access; the HuggingFace release contains rephrased questions, paraphrased reference answers, and predictions without raw note text (Appendix~\ref{app:reproducibility}).
\item \emph{Transparency:} model versions, evaluator code, raw predictions, and physician adjudication data are publicly released; the full application stack is not (Appendix~\ref{app:reproducibility}).
\item \emph{Data security:} HuggingFace public release; commit \texttt{da5f5b1} stripped raw note excerpts.
\item \emph{Safety event reporting:} no post-deployment safety surveillance protocol; this is research-only.
\item \emph{Risk and bias assessment:} demographic table and equity gap reported (Appendix~\ref{app:demographics}); no structured external bias review.
\end{itemize}

The paper provides honest disclosure on most categories; structured external review and a formal assurance-checklist filing are absent. Any deployment effort would need to close those gaps before clinical release.

\paragraph{Algorithmovigilance plan (sketch only).}
Per Embí (JAMIA 2021) and Davis, Embí, and Matheny (JAMIA 2024), a clinical-AI algorithmovigilance program for EpiKG would minimally include: (a) drift monitoring---e.g., monthly per-category accuracy on a held-out chart sample, with drift alerts triggered when any category accuracy moves more than $5\pp$ relative to a rolling baseline; (b) periodic re-validation---annual re-validation against a refreshed gold standard, with attention to upstream NLP-pipeline regressions; and (c) equity surveillance---per-demographic-stratum accuracy with alerting on $>5\pp$ gaps between strata. None of these are implemented in this work; they are flagged as deployment prerequisites and are not in scope for the present manuscript.

\paragraph{Harm-pathway analysis.}
The 10 ``potentially harmful'' items identified by physician adjudication (Appendix~\ref{app:adjudication_full}; Table~\ref{tab:safety_by_category}) span code status, oncology, and anticoagulation. For governance purposes these can be cast in an FDA-style probability $\times$ severity matrix (Table~\ref{tab:harm_pathway}). The matrix is illustrative; deployment would require structured FMEA with multidisciplinary review.

\begin{table}[ht]
\centering
\small
\caption{Illustrative probability $\times$ severity matrix for the 10 potentially harmful items observed in the internal 120-paired-item physician adjudication ($n=240$ single-condition ratings; Table~\ref{tab:safety_by_category}). Probability is the observed rate within the audited subset; deployment-context base rates would differ.}
\label{tab:harm_pathway}
\begin{tabular}{@{}p{3.6cm}p{2.6cm}p{6.5cm}@{}}
\toprule
Severity & Probability & Examples \\
\midrule
High (acute risk to life) & Unknown / unmeasured & Hallucinated DNR confirmation when chart documents full code; missed active cancer in current-state retrieval \\
Medium (clinically material) & Observed (e.g., $1/240 \approx 0.4\%$ on the internal 120-paired-item adjudication, i.e.\ 240 single-condition ratings; not the C1b discharge-only extension) & Anticoagulation misclassification (active vs.\ historical); medication-list conflation between inpatient and discharge orders \\
Low (low-acuity drift) & Observed at low rate & Verbose-but-correct hedging that could be misread as uncertainty; over-confident assertion-status framing on edge-case phrasing \\
\bottomrule
\end{tabular}
\end{table}

\paragraph{Equity gap reframing.}
The $9.9\pp$ Non-White vs.\ White accuracy gap on the C4g endpoint is reported descriptively in the demographic table (Appendix~\ref{app:demographics}) with the disclaimer ``severely underpowered ($n=10$ Non-White).'' We additionally frame this as an equity signal warranting investigation in any future deployment effort, per CHAI/FDA expectations on intended-use-population fairness. The current sample size cannot exclude either a true equity gap or sampling noise; deployment would require demographically diverse multi-site validation with pre-specified per-stratum accuracy targets and stratum-conditional recalibration if gaps persist.

\paragraph{Joint Commission--CHAI September 2025 alignment.}
The Joint Commission--CHAI September 2025 ``Responsible Use of AI in Health Care'' guidance is the operative governance standard for clinical-AI deployment going forward. EpiKG is not deployed in any healthcare setting in scope of this paper, and the present manuscript should not be read as a Joint Commission compliance document. Any institutional deployment of EpiKG (or a derivative) would be expected to map its governance, monitoring, and reporting practices to that guidance prior to clinical release.

\end{document}